\theoremstyle{plain}% Theorem-like structures
\newtheorem{theorem}{Theorem}[section]
\theoremstyle{definition}
\newtheorem{definition}[theorem]{Definition}
\theoremstyle{remark}
\begin{document}

%\jvol{00} \jnum{00} \jyear{2015} \jmonth{July}

%\articletype{GUIDE}

\title{Preorder-Based Triangle: A Modified Version of Bilattice-Based Triangle for Belief Revision in Nonmonotonic Reasoning}

\author{Kumar Sankar Ray\\
ECSU,Indian Statistical Institute, Kolkata
\and
Sandip Paul\\
ECSU,Indian Statistical Institute, Kolkata
\and
Diganta Saha\\
CSE Department\\
Jadavpur University}

\maketitle

\begin{abstract}
Bilattice-based triangle provides an elegant algebraic structure for reasoning with vague and uncertain information. But the truth and knowledge ordering of intervals in bilattice-based triangle can not handle repetitive belief revisions which is an essential characteristic of nonmonotonic reasoning. Moreover the ordering induced over the intervals by the bilattice-based triangle is not sometimes intuitive. In this work, we construct an alternative algebraic structure, namely preorder-based triangle and we formulate proper logical connectives for this. It is also demonstrated that Preorder-based triangle serves to be a better alternative to the bilattice-based triangle for reasoning in application areas, that involve nonmonotonic reasoning and fuzzy reasoning with uncertain information.
\end{abstract}

\section{Introduction:}

In many application domains, decision making and reasoning deal with imprecise and incomplete information. Fuzzy set theory is a formalism for representation of imprecise, linguistic information. A vague concept is described by a membership function, attributing to all members of a given universe X a degree of membership from the interval [0,1]. The graded membership value refers to many-valued propositions in presence of complete information. But this 'one-dimensional' measurement cannot capture the uncertainty present in information. In absence of complete information the membership degree may not be assigned precisely. This uncertainty with respect to the assignment of membership degrees is captured by assigning a range of possible membership values, i.e. by assigning an interval. Interval-valued Fuzzy Sets (IVFSs) deal with vagueness and uncertainty simultaneously by replacing the crisp [0,1]-valued membership degree by intervals in [0,1]. The intuition is that the actual membership would be a value within this interval. The intervals can be ordered with respect to their degree of truth as well as with respect to their degree of certainty by means of a bilattice-based algebraic structure, namely Triangle  \cite {arieli2004relating,arieli2005bilattice,cornelis2007uncertainty}. This algebraic structure serves as an elegant framework for reasoning with uncertain and imprecise information.

The truth and knowledge ordering of intervals as induced by the bilattice-based triangle are inadequate for capturing the repetitive revision and modification of belief in nonmonotonic reasoning and are not always intuitive. In this paper we address this issue and attempt to propose an alternate algebraic structure to eliminate the shortcomings of bilattice-based triangle. The major contributions of this paper are as follows:

$\bullet$ We demonstrate, with the help of proper examples (in section 3), that bilattice-based triangle is incapable of handling belief revision associated with nonmonotonic reasoning. In nonmonotonic reasoning, inferences are modified as more and more information is gathered. The prototypical example is inferring a particular individual can fly from the fact that it is a bird, but retracting that inference when an additional fact is added, that the individual is a penguin. Such continuous belief revision is not properly represented in bilattice-based triangle.

$\bullet$ We point that the truth ordering is unintuitive regarding the ordering of intervals when one interval lies completely within the other (section 3)and hence not suitable for some practical applications.

$\bullet$ Exploiting the discrepancies mentioned, we propose modifications for knowledge ordering and truth ordering of intervals so that the aforementioned shortcomings are removed (in section 4).

$\bullet$ Using the modified knowledge and truth ordering we construct an alternate algebraic structure, namely preorder-based triangle (in section 5). This structure can be thought of as a unification of bilattice-based triangle and default bilattice \cite{ginsberg1988multivalued}. With this we come out of the realm of bilattice-based structures and explore a new algebraic structure based on simple linear pre-ordering.

$\bullet$ The proposed algebraic structure is then equipped with appropriate logical operators, i.e. negation, t-norms, t-conorms, implicators, in section 6. Most of the operators are in unison with those used for the bilattice-based structure. But the modified orderings offer additional flexibility.

$\bullet$ The proposed algebraic structure is shown to be capable of handling commonsense reasoning problems that could not be handled by the bilattice-based triangle (in section 7). Moreover, it is demonstrated that the preorder-based triangle can be employed to construct an answer set programming paradigm suitable for nonmonotonic reasoning with vague and uncertain information.

\section {Intervals as degree of belief:}

This section addresses some of the basic definitions and notions that will ease the discussion in the forthcoming sections.

Uncertainty and incompleteness of information is unavoidable in real life reasoning. Hence, sometimes it becomes difficult and misleading, if not impossible, to assign a precise degree of membership to some fuzzy attribute or to assert a precise degree of truth to a proposition. Therefore, assigning an interval of possible truth values is the natural solution. Intervals are appropriate to describe experts' degrees of belief, which may not be precise \cite{nguyen1997interval}. If an expert chooses a value, say 0.8, as his degree of belief for a proposition, actually we can only specify vaguely that his chosen value is around 0.8 and can be represented by an interval, say $[0.75, 0.85]$. Otherwise an interval can be thought of as a collection of possible truth values that a single or multiple rational experts would assign to a proposition in a scenario. Due to lack of complete knowledge the assertions made by different experts will be different and this lack of unanimity can be reflected by appropriate interval. The natural ordering of degree of memberships $(\leq)$ can be extended to the set of intervals and that gives rise to IVFS \cite{sambuc1975functions}.

An IVFS can be viewed as an L-fuzzy set \cite {goguen1967fuzzy} and the corresponding lattice can be defined as \cite{deschrijver2007bilattice}:
	
\begin{definition} 
$\textbf{L}^I=(L^I,\leq_L)$, \ where $L^I = \{[x_1,x_2]|(x_1,x_2)\in[0,1]\times[0,1]$ \ and \ $x_1\leq x_2\}$ and $[x_1,x_2] \leq_L[y_1,y_2]$ \ iff \ $x_1 \leq y_1$ \ and \ $x_2 \leq y_2$.
\end{definition}

\begin{figure}
\begin{center}
\includegraphics[width=50mm]{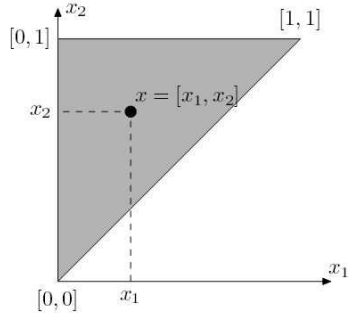}
\caption{The grey area is $L^I$}
\label{fig:closed_subintervals}
\end{center}
\end{figure}

In the definition, $L^I$ is the set of all closed subintervals in [0,1].Figure \ref{fig:closed_subintervals} shows the set $L^I$.

\textbf{Bilattice-based Triangle:}
Bilattices are ordered sets where elements are partially ordered with respect to two orderings, typically one depicts the degree of vagueness or truth (namely, truth ordering) and the other one depicting the degree of certainty (namely, knowledge ordering) \cite {arieli2004relating, cornelis2007uncertainty}. A bilattice-based triangle, or simply Triangle, can be defined as follows:

\begin {definition} \label{bilattdef}
Let \ $\textbf{L}=(L,\leq_L)$ \ be a complete lattice and let \ $I(L) = \{[x_1,x_2] \mid (x_1,x_2)\in \ L^2 $\ and $x_1\leq_L x_2\}$. A (bilattice-based) triangle is defined as a structure $\textbf{B(L)} = (I(L),\leq_t,\leq_k)$, where, for every $[x_1,x_2],[y_1,y_2]$ in $I(L)$:

1. $[x_1,x_2] \leq_t [y_1,y_2] \ \Leftrightarrow \ x_1 \leq_Ly_1$ and $x_2\leq_Ly_2$.

2. $[x_1,x_2] \leq_k [y_1,y_2] \ \Leftrightarrow \ x_1 \leq_Ly_1$ and $x_2\geq_Ly_2$.
\end{definition}
This triangle \textbf{B(L)} is not a bilattice, since, though the substructure $(I(L),\leq_t)$ is a complete lattice but $(I(L),\leq_k)$ is a complete semilattice.

When $\textit{L}$ is the unit interval [0,1], then I(L) describes membership of IVFS, $L^I$, and the lattice $\textbf{L}^I$ becomes $(I(L),\leq_t)$. In knowledge ordering, the intervals are ordered by set inclusion, as was suggested by Sandewall \cite {sandewall1989semantics}. The knowledge inherent in an interval $[c,d]$ is greater than another interval $[a,b]$ if $[c,d] \subseteq [a,b]$. 

\begin{figure}
\begin{center}
\includegraphics[width=60mm]{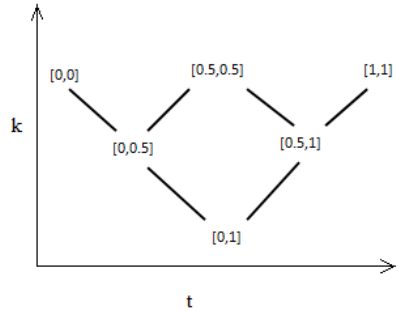}
\caption{Triangle B(\{0,0.5,1\})}
\label{fig:bitriangle}
\end{center}
\end{figure}

Triangle $\textbf{B}(\{0,0.5,1\})$ is shown in Figure \ref{fig:bitriangle}.

\section{Inadequacy of Bilattice-based Triangle:}

Intervals are used to approximate degree of truth of propositions in absence of complete knowledge. All values within an interval are considered to be equally probable to be the actual truth value of the underlying proposition. Thus considering intervals as truth status or epistemic state of propositions enables efficient representation of vagueness and uncertainty of information and reasoning. However, the Triangle structure suffers from the following shortcomings that must be eliminated.

\subsection{Inadequacy in modeling belief revision in nonmonotonic reasoning:} 
One important aspect of human commonsense reasoning is that it is nonmonotonic in nature \cite {brewka1991nonmonotonic}. In many cases conclusions are drawn in absence of complete information and we have to draw plausible conclusions based on the assumption that the world in which the reasoning is performed is normal and as expected. This is the best that can be done in contexts where the acquired knowledge is incomplete. But, these conclusions may have to be given up in light of further information. A proposition that was assumed to be true, may turn out to be false when new information is gathered. Such repetitive alterations of believes is an essential part of nonmonotonic reasoning. This type of belief revision may not be adequately represented by Triangle. The following discussion will illuminate this issue.

\subsubsection{An intuitive explanation:}
\label{sec:example1}
\textbf{Example 1:}Suppose the following information is given:
\vspace{0.15in}

Rules:

$Bird(x) \longrightarrow Fly(x)$,   [Birds Fly]

$Penguin(x) \longrightarrow \neg Fly(x)$,    [Penguin doesn't Fly]
\vspace{0.15in}

Facts:

Bird (Tweety)  [Tweety is a bird]

Given this information, suppose, multiple experts are trying to assess the degree of truth of the proposition "Tweety Flies" [Fly (Tweety)]. The rule "Birds Fly" is not a universally true fact, rather it's a general assumption that has several exceptions. Thus, being a Bird is not sufficient to infer that it will fly, since it may be a Penguin, an ostrich or some other non-flying bird. Since, nothing is specified about Tweety except for it is a bird, it is natural in human commonsense reasoning to "assume" that Tweety is not an exception and it will fly. Now, the confidence about this "asumption" will be different for different experts. An expert may bestow his complete faith on the fact that Tweety is not an exceptional bird and he will assign truth value 1 to "Tweety flies". Another expert may remain indecisive as he cannot ignore the chances that Tweety may be a non-flying bird and he will assign 0.5 (neither true nor false) to the proposition "Tweety flies". Others' assignments may be at some intermediate level depending on their perception about the world. Thus, the experts' truth assignments collectively construct an interval $[0.5,1]$ as the epistemic state of the rule "Birds fly" as well as of the fact " Fly(Tweety)".

Now, suppose an additional information is acquired that:

\begin{center}
Penguin(Tweety).   [Tweety is a penguin]
\end{center}

Then all the experts will unanimously declare Tweety doesn't fly and assign an interval $[0,0]$ as the revised epistemic state of the proposition "Tweety flies". The epistemic state of the proposition " Tweety flies" was first asserted by an interval $[0.5,1]$ and later the experts retracted their previously drawn decision to assert another interval $[0,0]$. From intuition it can be claimed that the interval $[0,0]$ makes a more confident and precise assertion than $[0.5,1]$, since in the former case all the experts were unanimous. But this is not reflected in the bilattice-based triangle (Figure \ref{fig:bitriangle}); since in Triangle $[0.5,1]$ and $[0,0]$ are incomparable in knowledge ordering. Thus, given the two intervals, based on the triangle structure, we remain clueless about which interval has higher degree of knowledge and which interval we should take up as final assertion of " Tweety flies". This is counter-intuitive and unwanted.

\begin{figure}
\begin{center}
\subfigure[Default Bilattice]{
\resizebox*{5cm}{!}{\includegraphics{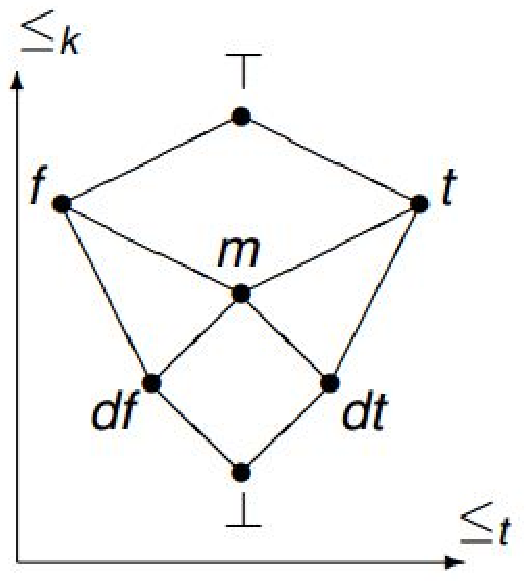}}}\hspace{5pt}
\subfigure[Multivalued Default Bilattice]{
\resizebox*{6cm}{!}{\includegraphics{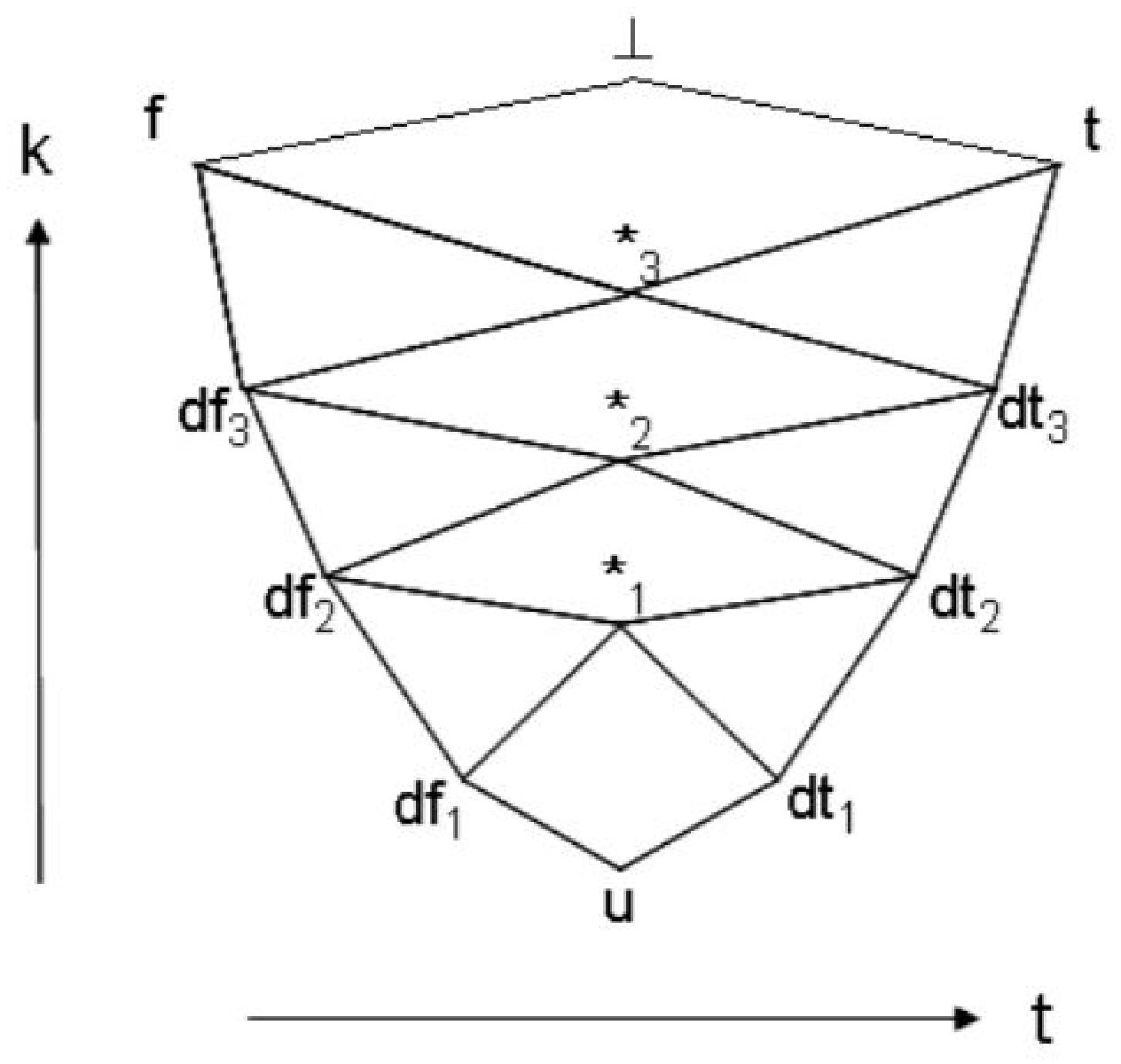}}}
\caption{Default bilattices for Nonmonotonic Reasoning} 
\label{fig:default}
\end{center}
\end{figure}

This type of scenario can be efficiently taken care of with the default bilattice \cite {ginsberg1988multivalued}. The general rule "Birds fly" will be assigned 'dt', i.e. true by default. Hence, 'Tweety flies' will also get dt. After acquiring the knowledge that Tweety is a penguin, 'Tweety flies' is asserted definitely false, i.e. f. In the default bilattice (Figure \ref{fig:default}.a) $f \geq_k dt$, expressing that the later conclusion is more certain than the earlier one.

The aforementioned example demonstrates that Triangle is incapable of depicting the continuous revision of decisions in absence of complete knowledge. Default bilattice is more appropriate than Triangle with respect to belief revision in nonmonotonic reasoning, but, vagueness or imprecision of information cannot be represented in Default bilattice.

\subsubsection{Example from an application domain:}
\label{sec:example2}
Bilattice-based structures are put to use for logical reasoning involving human detection and identity maintenance in visual surveillance systems by Shet et al. \cite{shet2007bilatticeth, shet2007bilattice, shet2006multivalued}. 

Multi-valued default bilattice (also known as prioritized default bilattice (Figure \ref{fig:default}.b) has been used for identity maintenance and contextual reasoning in visual surveillance system \cite{shet2006multivalued}. However in practice, logical facts are generated from vision analytics, which rely upon machine learning and pattern recognition techniques and generally have noisy values. Thus, in practical applications it would be more realistic to attach arbitrary amount of beliefs to logical rules rather than values such as dt, df etc that are allowed in multivalued default bilattices. For instance, similarity of different persons based on their appearances is a fuzzy attribute and may attain any degree over the [0,1] scale. But this cannot be captured by the multivalued default bilattice.

Bilattice-based square \cite{arieli2005bilattice} has been used for human detection in visual surveillance system. This algebraic structure is a better candidate than multivalued default bilattice as it provides continuous degrees of belief states.

The difference between bilattice-based square and bilattice-based triangle is that the former allows explicit representation of inconsistent information with different degrees of inconsistency. But it is pointed out by Dubois \cite{dubois2008ignorance} that square-like bilattices, where explicit representations of unknown $(\left\langle 0,0\right\rangle)$ and inconsistent $(\left\langle 1,1\right\rangle)$ epistemic states are allowed, can not preserve classical tautologies and sometimes give rise to unintuitive results.

Hence, bilattice-based triangle seems to be the most dependable and suitable algebraic structure to be used in the aforementioned applications.

Now let's apply the bilattice-based triangle to a slightly modified version of an example demonstrated by Shet et. al. \cite{shet2006multivalued, shet2006top} involving logical reasoning in identity maintenance. 

\textbf{Example 2:} The example deals with determining whether two individuals observed in an image should be considered as being one and the same. The rules and facts along with the assigned epistemic states are as follows:
 
rules:

r1: $\phi [appear\_similar (P_1, P_2) \longrightarrow equal (P_1, P_2)] = [0.5,1]$ 

r2: $\phi [distinct (P_1, P_2) \longrightarrow \neg equal (P_1, P_2)] = [0.9,1]$ 
\vspace{0.15in}

facts:

f1: $\phi [appear\_similar (a,b)] = [0.8,0.8]$

f2: $\phi [appear\_similar (c,d)] = [0.5,0.5]$

f3: $\phi [distinct (a,b)] = [1,1]$
\vspace{0.15in}

The specified set of facts depicts that individuals $a$ and $b$ are more similar than $c$ and $d$. Rules r1 and r2 encode the judgments of two different information sources or different algorithms, none of which present a confident, full-proof answer. However rule r2 (which may be based on some more accurate and highly reliable information) gives greater assurance to the non-equality of two persons than the assertion of equality expressed by rule r1, which may have came from a simple appearance matching technique of low dependability.

Intuitively, from the given information, a rational agent would put more confidence to the fact that individuals $a,b$ are not equal than on their equality; since the degree of distinction is more than the degree of similarity.The inference mechanism is specified in \cite {ginsberg1988multivalued}. The closure operator over the truth assignment function $\phi$ $(cl(\phi))$denotes the truth assignment that labels information entailed from the given set of rules and facts. The operator $cl_+(\phi)(q)$ takes into account set of rules that entail q and $cl_-(\phi)(q)$ considers set of rules that entail $\neg q$. Here the conjunctor, disjunctor and negator used are min, max and $1-$ operators respectively.

$cl_+(\phi)(equal(a,b))=[[0,1] \vee ([0.8, 0.8] \wedge [0.5,1])] = [[0,1] \vee [0.5,0.8]] = [0.5,0.8]$

$cl_-(\phi)(equal(a,b)) = \neg [[0,1] \vee ([1,1] \wedge [0.9,1])] = \neg [[0,1] \vee [0.9,1]] = [0,0.1]$
\vspace{0.15in}

Now the two intervals $[0.5,0.8]$ and $[0,0.1]$ are neither comparable with respect to $\leq_k$ in Triangle nor they have a $lub_k$ in the Triangle structure. Thus the two intervals cannot be combined to get a single assertion for $equal(a,b)$. Hence, using Triangle it is not possible to achieve the intended inference that $a$ and $b$ don't seem to be equal.

Thus the knowledge ordering in bilattice-based triangle must be modified in order to remove the aforementioned discrepancy. The modified knowledge ordering must incorporate within Triangle the ability to perform reasoning in presence of nonmonotonicity and demonstrate the repetitive belief revision, as the default bilattice has.

\subsection{Truth ordering is not always accurate:}
\label{sec:truthinaccurate}

An interval, taken as an epistemic state for a proposition, specifies the optimistic and pessimistic boundaries of the truth value of the proposition. In the bilattice-based triangle intervals are ordered with respect to $\leq_t$ ordering based on the boundaries of intervals; for two intervals $[x_1, x_2]$ and $[y_1, y_2]$, $[x_1, x_2] \leq_t [y_1, y_2]$ iff $x_1 \leq y_1$ and $x_2 \leq y_2$. Now with this ordering, any two intervals $x$ and $y$ are incomparable if $x$ is a proper sub-interval of $y$ or vice-versa, i.e. if one interval lies completely within the other with no common boundary. The justification behind this incomparability is that, if an interval, say $y$, is a proper sub-interval of $x$ then the actual truth value approximated by interval $x (\hat{x})$ may be greater or less than that of $\hat{y}$. For instance, if $x = [0.4, 0.8]$ and $y = [0.6, 0.7]$ then $\hat{x}$ can be less than $\hat{y}$ ( if $\hat{x} \in [0.4, 0.6)$) or $\hat{x}$ can be greater than $\hat{y}$ (if $\hat{x} \in (0.7,0.8]$). 

But similar situation may arise even when two intervals are not proper sub-interval of one another but just overlap, e.g. say $x=[0.4,0.8]$ and $y=[0.6,0.9]$. Yet these intervals are t-comparable, i.e., $[0.4, 0.8] \leq_t [0.6, 0.9]$. As the two intervals overlap, it is not ensured that the real truth value approximated by the lower interval will be smaller than the real truth value approximated by the higher interval (e.g. though $x \leq_t y$ but it may be the case that $\hat{x}=0.75$ and $\hat{y}=0.65$). In this respect the comparibility of these two intervals is not justified. Therefore, it is not always the most accurate ordering and can be regarded as a " weak truth ordering" \cite {esteva1994enriched}.

The intuitive justification in support for the truth ordering is given as \cite {deschrijver2009generalized}:
\begin{center}
"\textbf {$x \leq_t y$ iff the probability that $\hat{x} \leq \hat{y}$ is larger than $\hat{x} \geq \hat{y}$}"  \  \  \  \  \  \ $(*)$
\end{center}
i.e. the basic intuition behind truth ordering of two intervals $x$ and $y$ lies in comparing the probabilities $Prob (\hat{x} \geq \hat{y})$ and $Prob(\hat{x} \leq \hat{y})$, where, $\hat{x}$ and $\hat{y}$ are the actual truth values approximated by intervals $x$ and $y$ respectively.

Now, let us check whether this intuition holds good for the aforementioned pairs of intervals and eliminates the anomaly regarding their comparibility. Let us denote the three intervals $[0.4,0.8]$, $[0.6,0.7]$ and $[0.6,0.9]$ by $x,y,z$ respectively. Now for the pair of intervals $x$ and $y$, lets calculate the probabilities as specified in the right hand side of the iff condition in statement $(*)$. 

Since for an interval $[x_1,x_2]$ any value $\alpha \in [x_1,x_2]$ is equally probable to be equal to $\hat{x}$ (i.e. there is a uniform probability distribution over $[x_1, x_2]$) then for a sub-interval $[a,b]$ of $[x_1,x_2]$ we have, $Prob(\hat{x} \in [a,b]) = Prob(\hat{x} \in (a,b]) = Prob(\hat{x} \in [a,b)) = \frac{b-a}{x_2-x_1}$.

$\bullet$ For intervals $x$ and $y$, i.e. $[0.4,0.8]$ and $[0.6,0.7]$ respectively,

$Prob (\hat{x} \leq \hat{y})$

$=Prob (0.4\leq\hat{x}\leq 0.6$ and $ 0.6\leq\hat{y}\leq 0.7)+Prob (\hat{x} \leq \hat{y}|\hat{x},\hat{y} \in [0.6,0.7])$

$=Prob (0.4\leq\hat{x}\leq 0.6).Prob(0.6\leq\hat{y}\leq 0.7)+Prob(\hat{x}\leq\hat{y}|\hat{x},\hat{y}\in [0.6,0.7])$

$=\frac{(0.6-0.4)}{(0.8-0.4)}.1 + Prob(\hat{x} \leq \hat{y}|\hat{x},\hat{y} \in [0.6,0.7])$

$=0.5 + Prob(\hat{x} \leq \hat{y}|\hat{x},\hat{y} \in [0.6,0.7])$.\\

$Prob(\hat{x} \geq \hat{y})$

$=Prob (0.7 \leq \hat{x} \leq 0.8$ and $ 0.6 \leq \hat{y} \leq 0.7) + Prob (\hat{x} \geq \hat{y}|\hat{x},\hat{y} \in [0.6,0.7])$

$=Prob (0.7 \leq \hat{x} \leq 0.8).Prob(0.6 \leq \hat{y} \leq 0.7) + Prob (\hat{x} \geq \hat{y}|\hat{x},\hat{y} \in [0.6,0.7])$

$=\frac{(0.8-0.7)}{(0.8-0.4)}.1 + Prob(\hat{x} \geq \hat{y}|\hat{x},\hat{y} \in [0.6,0.7])$

$=0.25 + Prob(\hat{x} \geq \hat{y}|\hat{x},\hat{y} \in [0.6,0.7])$.

Now within the overlapped interval $[0.6,0.7]$, $\hat{x} \leq \hat{y}$ and $\hat{x} \geq \hat{y}$ are equally probable,i.e.
\begin{center}
$Prob (\hat{x} \leq \hat{y} | \hat{x}, \hat{y} \in [0.6,0.7]) = Prob (\hat{x} \geq \hat{y} | \hat{x}, \hat{y} \in [0.6,0.7])$. 
\end{center}

So, $Prob (\hat{x} \leq \hat{y}) > Prob (\hat{x} \geq \hat{y})$.

$\bullet$ For intervals $x$ and $z$, i.e. $[0.4,0.8]$ and $[0.6,0.9]$ respectively,

$Prob (\hat{x} \leq \hat{z})$

$=Prob (0.4 \leq \hat{x} \leq 0.6$ and $ 0.6 \leq \hat{z} \leq 0.9) + Prob (0.6 \leq \hat{x} \leq 0.8$ and $ 0.8 \leq \hat{z} \leq 0.9)+ Prob (\hat{x} \leq \hat{z}|\hat{x},\hat{z} \in [0.6,0.8])$

$=Prob (0.4 \leq \hat{x} \leq 0.6).Prob(0.6 \leq \hat{z} \leq 0.9) + Prob (0.6 \leq \hat{x} \leq 0.8).Prob(0.8 \leq \hat{z} \leq 0.9) + Prob (\hat{x} \leq \hat{z}|\hat{x},\hat{z} \in [0.6,0.8])$

$=\frac{(0.6-0.4)}{(0.8-0.4)}.1 + \frac{(0.8-0.6)}{(0.8-0.4)}.\frac{(0.9-0.8)}{(0.9-0.6)} + Prob(\hat{x} \leq \hat{z}|\hat{x},\hat{z} \in [0.6,0.8])$

$=0.5 + 0.167 + Prob(\hat{x} \leq \hat{z}|\hat{x},\hat{z} \in [0.6,0.8])$.

$=0.667 + Prob(\hat{x} \leq \hat{z}|\hat{x},\hat{z} \in [0.6,0.8])$.\\

$Prob(\hat{x} \geq \hat{z})$

$=Prob (\hat{x} \geq \hat{z}|\hat{x},\hat{z} \in [0.6,0.8])$

Again within the overlapped portion $[0.6,0.8]$, $\hat{x} \leq \hat{z}$ and $\hat{x} \geq \hat{z}$ are equally probable,i.e.
\begin{center}
$Prob (\hat{x} \leq \hat{z} | \hat{x}, \hat{z} \in [0.6,0.8]) = Prob (\hat{x} \geq \hat{z} | \hat{x}, \hat{z} \in [0.6,0.8])$. 
\end{center}

So, $Prob (\hat{x} \leq \hat{z}) > Prob (\hat{x} \geq \hat{z})$.

Thus it can be seen for pairs of intervals $x,y$ and $x,z$:
\begin{center}
$Prob (\hat{x} \leq \hat{y}) \geq Prob(\hat{x} \geq \hat{y})$ and\\
$Prob (\hat{x} \leq \hat{z}) \geq Prob(\hat{x} \geq \hat{z})$
\end{center}

Thus we can see that from the probabilistic perspective the two pairs of intervals behave similarly but the truth ordering $\leq_t$ treats them differently. Following the intuition of truth ordering, as depicted in statement $(*)$, both pairs of intervals should be comparable with respect to $\leq_t$ ordering and it should have been the case that $x \leq_t y$ and $x \leq_t z$. However, surprisingly although we have $x \leq_t z$ (since $x_1 < z_1$ and $x_2 < z_2$) but intervals $x$ and $y$ are not comparable with respect to the truth ordering $\leq_t$ (since $x_1 < y_1$ and $x_2 > y_2$). 

Thus the truth ordering in bilattice-based triangle is not intuitive when it compares partially or completely overlapped intervals.  The incomparability of intervals $x$ and $z$ with respect to $\leq_t$ contradicts with the intuition of truth ordering$(\leq_t)$. This may generate unintuitive and problematic results in application areas, specially when reasoning is done in absence of complete knowledge.

\vspace{0.15in}
\textbf{Application area where truth ordering fails to perform reasoning:}

Artificial intelligence based systems are proposed to be used for medical diagnosis and artificial triage system in emergency wards \cite{golding2008emergency, burke1990artificial, wilkes2010heterogeneous}. Use of possibilistic answer set programming for medical diagnosis has been reported in \cite{bauters2014semantics}. 

Possibilistic approach is suitable for capturing the uncertainty present in information. For instance, in the framework we can represent and reason with the possibility of a particular disease, given a set of symptoms. But sometimes the severity level of the particular disease,i.e. whether it is in primary stage or advanced stage, is also essential to know. For instance, only knowing a patient has coronary blockage is not sufficient, but whether there is any risk of heart attack or what type of surgery is to be done is dependent on the \textit{percentage of blockage} of the coronary artery. Again if a patient comes in an emergency ward with stomach ache then the triage nurse would assess the urgency depending on the degree of pain the patient is experiencing. Sometimes the patient is asked to rate his/her pain on a scale of 1 to 10. These are not uncertain quantities but fuzzy quantities. Therefore the truth value of the statement "Patient A is suffering from disease X" would not be bivalent rather would have to be chosen from a continuous range of values from $[0,1]$. This information can not be represented in the Possibilistic ASP, which is essentially based on two valued logic.

In practice, several factors contribute to the severity of a particular ailment. Therefore sometimes it becomes difficult to diagnose a disease with a specific severity degree; rather it is more natural to ascribe a range of values.This may be due to the presence of subjective uncertainties, e.g. fluctuating blood pressure or body temperature or nonspecific rating of a patient's pain. Another reason may be incomplete knowledge, based on which decisions are being taken. For example, in an emergency situation medical decisions have to be taken rapidly when the doctors can't wait for all the test results and must make their decisions based on assumptions, rules of thumb and experience. Again, multiple agents (in this case doctors in medical board or different triage nurses) may not be unanimous about a judgment. Thus, the epistemic state of a statement like 'Patient A has disease d1' would be comprised of an subinterval in $[0,1]$, where each individual value in the interval represent an expert's opinion regarding how true the statement is. The epistemic state of the statement 'Patient A has disease d1' is same as the suspected severity degree of the disease. When the experts have complete knowledge and there isn't any uncertainty, they provide unanimous opinion depicted by an exact interval of the form $[x,x]$, where the value $x$ would signify the severity of the disease. More is the uncertainty wider would be the assigned interval. For example, the pain rated by a patient by the range $5-7$ on a scale of $0-10$, can be represented by the interval $[0.5, 0.7]$. Therefore when a system is built to perform reasoning with these types of information subintervals of $[0,1]$ are assigned to simple propositions as their epistemic states denoting the vagueness and uncertainty. For example, the pain rated by a patient by the range $5-7$ on a scale of $0-10$, can be represented by the interval $[0.5, 0.7]$. Decisions regarding the prescribed treatment would be based on the epistemic states and comparing them. 

Bilattice-based triangle structure can be used to reason with such vague and uncertain information. The two natural orderings in bilattice-based triangle, namely knowledge ordering and truth ordering, can be employed to compare respectively certainty and vagueness about different propositions, e.g. severity degree of ailments. In medical decisions the truth ordering may play a more crucial role.

\textbf{Example 3.} Suppose an intelligent triage system has diagnosed a patient with two diseases (say $di1$ and $di2$). There may be situations where the doctor has to prioritize the treatment of the two diseases based on their severity. This situation may arise when medications for two diseases are mutually incompatible or both of them require surgery that can not be done simultaneously. Thus deciding which treatment is more urgent becomes crucial. 

Suppose the artificial triage system has the following rules in its system regarding the two diseases di1 and di2 and its medications (dr1 and dr2):

1. $\phi [di1,$ not $dr2 \longrightarrow dr1] = [1,1],$

2. $\phi [di2,$ not $dr1 \longrightarrow dr2] = [1,1],$

These rules denote the mutual incompatibility of drugs for the two diseases. They suggest that if a patient is diagnosed with disease 1(2) and drug 2(1) is not being administered then drug 1(2) can be given to the patient. Hence when a patient is diagnosed with both of these diseases the triage system must calculate which of the two diseases is more severe and requires urgent medication. Based on the symptoms the severity of the diseases are specified by the intervals $i_{d1}$ and $i_{d2}$ respectively. If the assigned epistemic states, that denotes the level of severity, are exact intervals then simply comparing the assigned value would be enough. But when $i_{d1}$ and $i_{d2}$ are non-exact intervals then it must be decided which among the knowledge ordering and truth ordering has to be used for comparing the intervals to compare the severity. An example can help to clarify the intuition. Suppose, the intervals $i_{d1}$ and $i_{d2}$ are respectively $[0.5,0.6]$ and $[0.6,0.9]$.  Here, as a whole $di2$ is suspected to be in a more severe condition than $di1$, because for each value $a \in i_{d2}$, $a$ is greater than the upper limit of $i_{d1}$. Hence though the uncertainty regarding $di2$ is more (since the interval $i_{d2}$ is wider than that of $i_{d1}$), but the chance that $di2$ is more critical is higher than that of $di1$ and $di2$ is to be treated before $di1$. The truth ordering in the bilattice-based triangle is supposed to capture this intuition and indeed in this case we have $i_{d2} \leq_t i_{d1}$ (since, $[0.5,0.6] \leq_t [0.6,0.9]$). Thus the triage system must have the following rules to prioritize the treatment considering the urgency or severity of the diseases:

3. $\phi [(\phi[di2] \leq_t \phi[di1]) \longrightarrow dr1] = [1,1]$

4. $\phi [(\phi[di1] \leq_t \phi[di2]) \longrightarrow dr2] = [1,1]$

Now suppose. based on the symptoms and test results (which may not be complete) the severity of each disease are asserted and fed into the triage system's database as follows:  

5. $\phi [di1] = [0.4,0.9]$

6. $\phi [di2] = [0.5,0.6]$

Now, considering the input information the triage system would have to decide which of the drugs among $dr1$ and $dr2$ have to be administered based on the severity. 

But now suppose $i_{d1}$ and $i_{d2}$ are respectively $[0.4,0.9]$ and $[0.5,0.6]$. These two intervals cannot be compared with respect to $\leq_t$. Therefore, in this situation, the artificial triage system cannot decide which disease is to be treated first. But this is not intuitive, since, following the similar analysis as demonstrated in the previous subsection it can be proved that it is more probable that the actual severity of $di1$ (as approximated by the interval $[0.4,0.9]$) is higher than that of $di2$ (as approximated by interval $[0.5,0.6]$). Thus, treatment of $di1$ is more urgent and hence $dr1$ must be administered first. 

The bilattice-based triangle is incapable of capturing this notion. Thus the truth ordering in bilattice-based triangle must be modified so that the new ordering behaves as $\leq_t$ when intervals are non-overlapping and captures the probabilistic essence stated in statement $(*)$ when intervals are overlapping.

\section {Modification in Triangle structure:}

Based on the discussions in the above two subsections the bilattice-based triangle is modified.

\subsection {Modification in knowledge ordering:}

The knowledge ordering can be defined based on just the length of intervals and irrespective of the real truth values they attempt to approximate. Thus for two intervals $[x_1, x_2]$ and $[y_1, y_2] \in L^I$, where, $L^I$ is the set of sub-intervals of $[0,1]$ as shown in Figure \ref{fig:closed_subintervals}

\begin{center}
$[x_1,x_2] \leq_{k_p} [y_1,y_2]\Leftrightarrow (x_2 - x_1) \geq (y_2-y_1)$.
\end{center}

that is, wider the interval lesser is the knowledge content. Equality of the width of intervals is a necessary condition for $x=y$, but not a sufficient condition; because two different intervals may have equal width, e.g. $[0.1,0.2]$ and $[0.7,0.8]$.

\begin{figure}
\begin{center}
\includegraphics [width = 80mm]{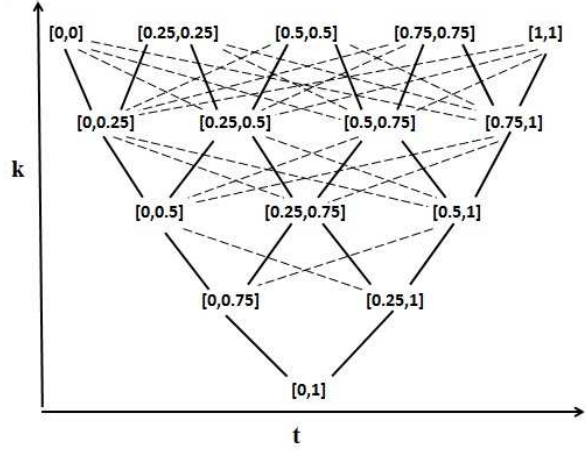}
\caption{I(\{0,0.25,0.5,0.75,1\}) with modified knowledge ordering}
\label{fig:modiknowledge}
\end{center}
\end{figure}

The algebraic structure for $(I(\{0,0.25,0.5,0.75,1\}), \leq_t, \leq_{k_p})$ is shown in Fig. \ref{fig:modiknowledge}.

\subsection {Modification in Truth Ordering:}

The truth ordering $(\leq_t)$ gives rise to certain discrepancies in ordering intervals, as discussed in section 3.2. Lets take statement $(*)$ as a starting point to revisit the truth ordering, especially in case when one interval is a proper sub-interval of the other. In this respect the following theorem is stipulated.
\vspace{0.1in}

\begin{theorem}

For two intervals $x=[x_1, x_2]$ and $y=[y_1,y_2] \in L^I$,
\begin{center}
$Prob (\hat{x} \geq \hat{y}) \leq Prob (\hat{x} \leq \hat{y}) \equiv x_m \leq y_m$
\end{center}
where, $\hat{x}(\hat{y})$ stands for the actual truth value approximated by the interval $x(y)$; and $x_m$ and $y_m$ are respectively the midpoints of intervals x and y.

\end{theorem}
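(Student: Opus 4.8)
The plan is to translate the statement into a question about the single difference random variable $\hat y-\hat x$ and exploit a symmetry. Recall the modelling convention already in force in Section~3.2: $\hat x$ is uniform on $[x_1,x_2]$, $\hat y$ is uniform on $[y_1,y_2]$, and the two are independent. First I would dispose of the trivial case in which both intervals are the same singleton $[a,a]$: then $\hat x=\hat y=a$ surely, both probabilities equal $1$, and $x_m=y_m$, so both sides of the claimed equivalence hold. In every other case the joint law of $(\hat x,\hat y)$ is absolutely continuous, so $\mathrm{Prob}(\hat x=\hat y)=0$ and hence
\begin{equation*}
\mathrm{Prob}(\hat x \ge \hat y) + \mathrm{Prob}(\hat x \le \hat y) = 1 .
\end{equation*}
Thus $\mathrm{Prob}(\hat x \ge \hat y)\le \mathrm{Prob}(\hat x \le \hat y)$ is equivalent to $\mathrm{Prob}(\hat x \ge \hat y)\le \tfrac12$, and it suffices to characterise when the latter holds.

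Next I would set $D:=\hat y-\hat x$, whose support is the interval $[\,y_1-x_2,\;y_2-x_1\,]$ with midpoint exactly $\mu:=y_m-x_m$ (note $y_1-x_2 = 2\mu-(y_2-x_1)$). The key lemma is that the law of $D$ is symmetric about $\mu$: since $\hat x$ and $2x_m-\hat x$ are identically distributed, $\hat y$ and $2y_m-\hat y$ are identically distributed, and $\hat x,\hat y$ are independent, the variables $D=\hat y-\hat x$ and $(2y_m-\hat y)-(2x_m-\hat x)=2\mu-D$ are identically distributed. From this, $\mathrm{Prob}(D\le 0)=\mathrm{Prob}(D\ge 2\mu)$, so
\begin{equation*}
\mathrm{Prob}(\hat x \ge \hat y) - \mathrm{Prob}(\hat x \le \hat y) \;=\; \mathrm{Prob}(D\le 0)-\mathrm{Prob}(D\ge 0)\;=\;\mathrm{Prob}(D\ge 2\mu)-\mathrm{Prob}(D\ge 0).
\end{equation*}
If $\mu\ge 0$ then $2\mu\ge 0$, so $\{D\ge 2\mu\}\subseteq\{D\ge 0\}$ and the displayed difference is $\le 0$; this gives the ``$\Leftarrow$'' direction. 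For ``$\Rightarrow$'' I would show that $\mu<0$ forces the difference to be strictly positive: it equals $\mathrm{Prob}(2\mu\le D<0)$, and since the (trapezoidal/triangular) convolution density of $D$ is strictly positive on the interior of its support, which is centred at $\mu<0$ and therefore overlaps the open interval $(2\mu,0)$ in a set of positive length, this probability is nonzero. Hence $\mu<0$ makes $\mathrm{Prob}(\hat x \ge \hat y) > \mathrm{Prob}(\hat x \le \hat y)$, i.e. the hypothesis fails; contrapositively the hypothesis implies $\mu\ge 0$, that is $x_m\le y_m$.

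The step I expect to be the main obstacle is precisely that last one: showing the $\Rightarrow$ inequality is \emph{strict} when $\mu<0$, uniformly over all configurations, in particular when one or both intervals degenerate to points or when $0$ sits at an endpoint of $\mathrm{supp}(D)$ rather than its interior. These boundary situations I would treat by a short case split using the identity $y_1-x_2=2\mu-(y_2-x_1)$ (for instance, if $y_2-x_1<0$ the whole support of $D$ lies below $0$, and if both intervals are singletons $D$ is a.s.\ the constant $\mu$); in each case $\mathrm{Prob}(\hat x\ge\hat y)>\mathrm{Prob}(\hat x\le\hat y)$ follows directly. An equivalent packaging that sidesteps the density estimate is to observe that the continuous CDF $F_D$ satisfies $F_D(\mu)=\tfrac12$ by symmetry and is strictly increasing on the support, so that $\mathrm{Prob}(\hat x\ge\hat y)=F_D(0)\le\tfrac12 \iff 0\le\mu \iff x_m\le y_m$; I would present whichever version is cleaner. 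Everything else is routine.
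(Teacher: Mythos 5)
Your proof is correct, but it follows a genuinely different route from the paper. The paper proves the equivalence by a case split on the relative position of the two intervals --- $y$ a proper sub-interval of $x$, the two intervals overlapping, and the two intervals disjoint --- and in each case computes $\mathrm{Prob}(\hat{x}\leq\hat{y})$ and $\mathrm{Prob}(\hat{x}\geq\hat{y})$ directly from the uniform distributions (decomposing according to which sub-region $\hat{x}$, $\hat{y}$ fall into, and cancelling the equal conditional probabilities on the common part), reducing each case to endpoint algebra that ends in $x_1+x_2\leq y_1+y_2$. You instead pass to the single variable $D=\hat{y}-\hat{x}$ and use the reflection symmetry $D\sim 2\mu-D$ with $\mu=y_m-x_m$, which collapses all three of the paper's cases (and the degenerate ones the paper does not discuss, e.g.\ point intervals) into one argument and makes it conceptually transparent why the midpoint is the right statistic: it is the centre of symmetry of $D$. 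Your treatment of the converse direction via strict positivity of $\mathrm{Prob}(2\mu\leq D<0)$ for $\mu<0$ is sound, since the interior of the support of $D$ contains $\mu$, which lies in $(2\mu,0)$; the only blemish is the claim that outside the doubly-degenerate case the joint law of $(\hat{x},\hat{y})$ is absolutely continuous --- false when exactly one interval is a singleton --- but all you need there is $\mathrm{Prob}(\hat{x}=\hat{y})=0$, which still holds, and your main computation with the difference $\mathrm{Prob}(D\leq 0)-\mathrm{Prob}(D\geq 0)$ does not even require it. What the paper's approach buys is an elementary, self-contained calculation that exhibits the probabilities explicitly in each geometric configuration (in particular the nested case that motivates the new ordering); what yours buys is brevity, uniform coverage of boundary configurations, and an explicit statement of the independence and uniformity assumptions that the paper uses only implicitly.
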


\begin{proof}
%\textbf{Proof:} 
The proof is constructed by considering several cases depending on how intervals $x$ and $y$ are situated on the $[0,1]$ scale. Without loss of generality it is assumed that $x_1 \leq y_1$ for showing the proof. For the other case, i.e. $x_1 > y_1$ similar proof can be constructed which is not shown here.
 
 Since any $x \in [x_1,x_2]$ is equally probable to be equal to $\hat{x}$ (i.e. there is a uniform probability distribution over $[x_1, x_2]$) then for a sub-interval $[a,b]$ of $[x_1,x_2]$ we have, $Prob(\hat{x} \in [a,b]) = Prob(\hat{x} \in (a,b]) = Prob(\hat{x} \in [a,b)) = \frac{b-a}{x_2-x_1}$.
\vspace{0.1in}

\textbf{Case 1:}

\begin{figure}
\begin{center}
\includegraphics[width=40mm]{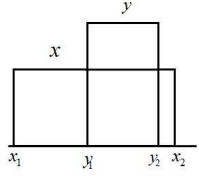}
\caption{y is a proper sub-interval of x}
\label{fig:subinterval}
\end{center}
\end{figure}

Suppose, $x=[x_1,x_2]$ has $y=[y_1,y_2]$ as a proper sub-interval (Figure \ref{fig:subinterval}). For these intervals $x_1 < y_1$ and $y_2 < x_2$, hence $x$ and $y$ can not be ordered using $\leq_t$. 

In this case,

1. $\hat{x} \leq \hat{y}$ iff $\hat{x} \in [x_1, y_1)$ or $(\hat{x} \leq \hat{y}$ given $\hat{x}, \hat{y} \in [y_1,y_2])$,

2. $\hat{x} \geq \hat{y}$ iff $\hat{x} \in (y_2, x_2]$ or $(\hat{x} \geq \hat{y}$ given $\hat{x}, \hat{y} \in [y_1,y_2])$.

Within the smaller interval $[y_1,y_2]$ the $\hat{x} \leq \hat{y}$ and $\hat{x} \geq \hat{y}$ are equally probable,i.e.

\begin{center}
$Prob (\hat{x} \leq \hat{y} | \hat{x}, \hat{y} \in [y_1,y_2]) = Prob (\hat{x} \geq \hat{y} | \hat{x}, \hat{y} \in [y_1,y_2])$. 
\end{center}

Now,

$Prob(\hat{x} \geq \hat{y}) \leq Prob(\hat{x} \leq \hat{y})$

$\equiv Prob(\hat{x} \in (y_2, x_2]$ or $(\hat{x} \geq \hat{y}$ given $\hat{x}, \hat{y} \in [y_1,y_2])) \leq Prob(\hat{x} \in [x_1, y_1)$ or $(\hat{x} \leq \hat{y}$ given $\hat{x}, \hat{y} \in [y_1,y_2]))$

$\equiv Prob(\hat{x} \in (y_2, x_2]) + Prob (\hat{x} \geq \hat{y} | \hat{x}, \hat{y} \in [y_1,y_2]) \leq Prob(\hat{x} \in [x_1, y_1)) + Prob (\hat{x} \leq \hat{y} | \hat{x}, \hat{y} \in [y_1,y_2])$

$\equiv Prob(\hat{x} \in (y_2, x_2]) \leq Prob(\hat{x} \in [x_1, y_1))$

$\equiv \frac{x_2 - y_2}{x_2 - x_1} \leq \frac{y_1 - x_1}{x_2 - x_1}$

$\equiv (x_2 - y_2) \leq (y_1 - x_1)$  (since $(x_2 - x_1) > 0$)

$\equiv (x_1 + x_2) \leq (y_1 + y_2)$

$\equiv \frac{x_1 + x_2}{2} \leq \frac{y_1 + y_2}{2}$

$\equiv$ the midpoint of interval $x$ $\leq$ the midpoint of interval $y$

$\equiv x_m \leq y_m$.
\vspace{0.1in}

\textbf{Case 2:}

\begin{figure}
\begin{center}
\includegraphics[width=40mm]{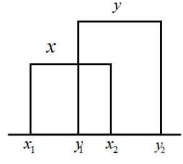}
\caption{x and y are overlapping}
\label{fig:overlapping}
\end{center}
\end{figure}

Suppose two intervals $x = [x_1, x_2]$ and $y = [y_1, y_2]$ are overlapping, as shown in Figure \ref{fig:overlapping}. In this case, $x_1 \leq y_1$ and $x_2 \leq y_2$.
\vspace{0.1in}

Here,

1. $\hat{x} \leq \hat{y}$ iff $\hat{x} \in [x_1, y_1)$ or $(\hat{x} \in [y_1,x_2]$ and $\hat{y} \in (x_2, y_2])$ or $(\hat{x} \leq \hat{y}$ given $\hat{x}, \hat{y} \in [y_1,x_2])$,

2. $\hat{x} \geq \hat{y}$ iff $(\hat{x} \geq \hat{y}$ given $\hat{x}, \hat{y} \in [y_1,x_2])$.
\vspace{0.15in}

$ Prob(\hat{x} \geq \hat{y}) \leq Prob(\hat{x} \leq \hat{y})$

$\equiv Prob (\hat{x} \geq \hat{y} | \hat{x}, \hat{y} \in [y_1,x_2]) \leq Prob(\hat{x} \in [x_1, y_1)) + Prob(\hat{x} \in [y_1,x_2]$ and $\hat{y} \in (x_2, y_2]) + Prob(\hat{x} \leq \hat{y} | \hat{x}, \hat{y} \in [y_1, x_2])$

$\equiv Prob(\hat{x}) \in [x_1, y_1) + Prob (\hat{x} \in [y_1,x_2]$ and $\hat{y} \in (x_2, y_2]) \geq 0$

\begin {flushright}
(since, $Prob (\hat{x} \geq \hat{y} | \hat{x}, \hat{y} \in [y_1,x_2]) = Prob(\hat{x} \leq \hat{y} | \hat{x}, \hat{y} \in [y_1, x_2])$)
\end{flushright}

$\equiv \frac{y_1 - x_1}{x_2 - x_1} + \frac {x_2 - y_1}{x_2-x_1}\frac{y_2 - x_2}{y_2 - y_1} \geq 0$

$\equiv (y_1 - x_1)(y_2 - y_1) + (y_2 - x_2)(x_2 - y_1) \geq 0$

$\equiv y_1y_2 - x_1y_2 - y_1^2 +x_1y_1 + x_2y_2 - x_2^2 - y_1y_2 + x_2y_1 \geq 0$

$\equiv x_1y_1 + x_2y_2 + x_2y_1 - x_1y_2 - y_1^2 - x_2^2 \geq 0$

\begin{flushright}
(cancelling $y_1y_2$ and $-y_1y_2$ and rearranging terms)
\end{flushright}

$\equiv x_2(y_2 + y_1 - x_2) - y_1(y_1 - x_1) - x_1y_2 \geq 0$

$\equiv x_2(y_2 + y_1 - x_2) - y_1(y_1 - x_1) - x_1x_2 \geq 0$

\begin{flushright}
(since $x_2 \leq y_2$)
\end{flushright}

$\equiv x_2(y_2 + y_1 - x_1 - x_2) - y_1(y_1 - x_1) \geq 0$

$\equiv x_2(y_2 + y_1 - x_1 - x_2) \geq 0$

\begin{flushright}
(since $x_1 \leq y_1$)
\end{flushright}

$\equiv (y_2 + y_1 - x_1 - x_2) \geq 0$

$\equiv (y_1 + y_2) \geq (x_1 + x_2)$

$\equiv \frac{y_1 + y_2}{2} \geq \frac{x_1 + x_2}{2}$

$\equiv$ the midpoint of interval $y$ $\geq$ the midpoint of interval $x$

$\equiv x_m \leq y_m$.

\textbf {Case 3:}

\begin{figure}
\begin{center}
\includegraphics[width=80mm]{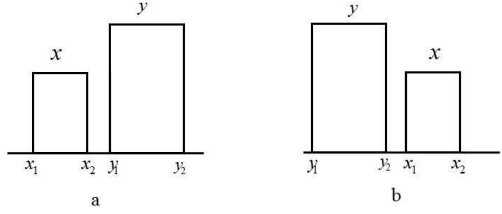}
\caption{x and y are disjoint and $x \leq_t y$}
\label{fig:disjoint}
\end{center}
\end{figure}

We can have two subcases for disjoint intervals (Figure \ref{fig:disjoint}). For subcase a, the interval $x$ is lower than the interval $y$, i.e. $\forall a \in [x_1, x_2], a \leq y_1$ or in other words $x_1 < x_2 \leq y_1 < y_2$. Similarly, for subcase b, the interval $y$ is lower than the interval $x$, i.e. $\forall b \in [y_1, y_2], b \leq x_1$ or in other words $y_1 < y_2 \leq x_1 < x_2$.
\vspace{0.1in}

In this case, since intervals are disjoint,

$Prob(\hat{x} \leq \hat{y}) = 1$ and $Prob(\hat{x} \geq \hat{y}) = 0$ if $x_2 \leq y_1$ (Subcase a);

$Prob(\hat{x} \leq \hat{y}) = 0$ and $Prob(\hat{x} \geq \hat{y}) = 1$ if $y_2 \leq x_1$ (Subcase b);
\vspace{0.1in}

Now,

$Prob(\hat{x} \geq \hat{y}) < Prob(\hat{x} \leq \hat{y})$

$\Rightarrow Prob(\hat{x} \geq \hat{y}) = 0$ and $Prob(\hat{x} \leq \hat{y}) = 1$

$\Rightarrow \forall a \in [x_1, x_2], a \leq y_1$

$\Rightarrow x_2 \leq y_1$

$\Rightarrow x_1 + x_2 \leq y_1 + x_1$

$\Rightarrow x_1 + x_2 < y_1 + y_2$  [since, $x_1 < y_1$]

$\Rightarrow x_m < y_m$.
\vspace{0.1in}

Again;
\vspace{0.15in}

$x_m < y_m$

$\Rightarrow x_1 + x_2 < y_1 + y_2$

$\Rightarrow x_1 < y_1$ and $x_2 < y_2$ and $x_2 \leq y_1$ [since intervals are disjoint]

$\Rightarrow Prob(\hat{x} \geq \hat{y}) = 0$ and $Prob(\hat{x} \leq \hat{y}) = 1$

$\Rightarrow Prob(\hat{x} \geq \hat{y}) < Prob(\hat{x} \leq \hat{y})$.
\vspace{0.1in}

Thus $Prob(\hat{x} \geq \hat{y}) < Prob(\hat{x} \leq \hat{y}) \equiv x_m < y_m$.

Here the proof ends.
%\qed
\end{proof}

\vspace{0.1in}

Hence, it is proved that the straightforward way to compare the probabilities $Prob(\hat{x} \geq \hat{y})$ and $Prob(\hat{x} \leq \hat{y})$ for two intervals $x$ and $y$ is to compare their midpoints. Case 1 in the above proof is particularly interesting, where one interval is a proper sub-interval of the other. Though the chosen intervals $x$ and $y$ are not comparable with respect to $\leq_t$ ordering, but we can compare their midpoints and thus order the probabilities $Prob(\hat{x} \geq \hat{y})$ and $Prob(\hat{x} \leq \hat{y})$. Thus following statement $(*)$ a truth ordering can be imposed on $x$ and $y$ based on the probabilistic comparison. The existing truth ordering $(\leq_t)$ as shown in Definition \ref{bilattdef}, doesn't allow this comparability of $x$ and $y$, and hence a new truth ordering is called for.

Now that we are able to estimate and order the probabilities, in light of statement $(*)$ we are in a place to construct a generalised truth ordering $(\leq_{t_p})$ as follows:

\begin{center}
$x \leq_{t_p} y \Leftrightarrow x_m \leq y_m$.
\end{center}

\begin{figure}
\begin{center}
\includegraphics[width=40mm]{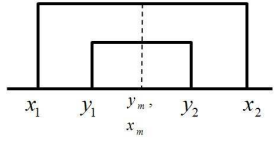}
\caption{Intervals incomparable in t-ordering but not equal}
\label{fig:incomparable}
\end{center}
\end{figure}

The equality of midpoints of two intervals $x$ and $y$, (i.e. $\frac{x_1 + x_2}{2} = \frac{y_1 + y_2}{2}$) is a \textit{necessary} condition for $x = y$, but not a \textit{sufficient} condition; because two different intervals can have same midpoint, as shown in Figure \ref{fig:incomparable}.

Moreover, the discrepancy mentioned in section 3.2 is resolved, since cases where intervals are overlapped and when one interval is a proper sub-interval of the other are treated uniformly and in each case intervals are comparable with respect to $\leq_{t_p}$.

\begin{theorem}

For two intervals $x = [x_1,x_2]$ and $y = [y_1,y_2] \in L^I$, such that none is a proper subinterval of the other,
\begin{center}
$x \leq_t y \Rightarrow x \leq_{t_p} y$.
\end{center}
\end{theorem}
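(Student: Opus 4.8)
The plan is to unfold both orderings in terms of the interval endpoints and reduce the claim to an elementary inequality. First I would recall that, by Definition~\ref{bilattdef}, $x \leq_t y$ means precisely $x_1 \leq y_1$ and $x_2 \leq y_2$, while the newly introduced ordering reads $x \leq_{t_p} y \Leftrightarrow x_m \leq y_m$, with $x_m = \frac{x_1+x_2}{2}$ and $y_m = \frac{y_1+y_2}{2}$. So the theorem merely asserts that the first pair of inequalities forces the midpoint inequality.

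The proof itself is then a one-line computation: assuming $x_1 \leq y_1$ and $x_2 \leq y_2$, adding the two inequalities gives $x_1 + x_2 \leq y_1 + y_2$, and dividing by $2$ yields $x_m \leq y_m$, i.e. $x \leq_{t_p} y$. Thus the implication is nothing more than monotonicity of addition; in particular it does not consume the hypothesis that neither interval is a proper subinterval of the other, and indeed $x \leq_t y \Rightarrow x \leq_{t_p} y$ holds for every pair in $L^I$. I would add a short remark explaining why the subinterval hypothesis is nonetheless the natural framing: when $x \leq_t y$ holds the two orderings agree, and the genuine discrepancy only surfaces for $\leq_t$-incomparable pairs, which by Case~1 of the preceding theorem is exactly the situation where one interval lies properly inside the other — there $\leq_{t_p}$ still compares the two via midpoints whereas $\leq_t$ does not. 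So restricting to pairs without proper containment isolates precisely the regime in which $\leq_{t_p}$ is a conservative extension of $\leq_t$, and the theorem certifies that conservativity.

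There is essentially no obstacle here; the only care needed is to quote the endpoint characterization of $\leq_t$ from Definition~\ref{bilattdef} and the midpoint characterization of $\leq_{t_p}$ correctly, and to observe that the step $x_1 \leq y_1,\ x_2 \leq y_2 \ \Rightarrow\ x_1 + x_2 \leq y_1 + y_2$ is one-directional — which is why only the implication, and not an equivalence, is claimed.
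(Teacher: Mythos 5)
Your proof is correct and is essentially identical to the paper's: the paper likewise unfolds $x \leq_t y$ into the componentwise endpoint inequalities, adds them to get $x_1+x_2 \leq y_1+y_2$, and concludes $x_m \leq y_m$, i.e.\ $x \leq_{t_p} y$. Your added observation that the ``no proper subinterval'' hypothesis is never actually used is accurate and a fair commentary, but it does not change the argument.
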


%\textbf{Proof}
\begin{proof}
From the definition,
\vspace{0.1in}

$x \leq_t y \Leftrightarrow x_1 \leq x_2$ and $y_1 \leq y_2$

 \  \  \  \  \  \  \  \  \  \ $\Rightarrow x_1 + x_2 \leq y_1 + y_2$

 \  \  \  \  \  \  \  \  \  \ $\Rightarrow x_m \leq y_m$
 
 \  \  \  \  \  \  \  \  \  \ $\Rightarrow x \leq_{t_p} y$.

%\qed
\end{proof}

Thus, the probabilistic analysis gives a broader truth ordering of the intervals that can be achieved by comparing midpoints of intervals. For each pair of intervals if they are comparable with respect to $\leq_t$ they are also comparable with respect to the modified truth ordering $\leq_{t_p}$ and additionally $\leq_{t_p}$ can order intervals when one of them is a proper sub-interval of the other and hence are not $\leq_t-$comparable.

For instance, for two intervals $x = [0,1]$ and $y = [0.8,0.9]$ we have $[0,1] <_{t_p} [0.8, 0.9]$ though $x$ and $y$ are not t-comparable w.r.t. $\leq_t$. 

\begin{figure}
\begin{center}
\includegraphics [width=80mm]{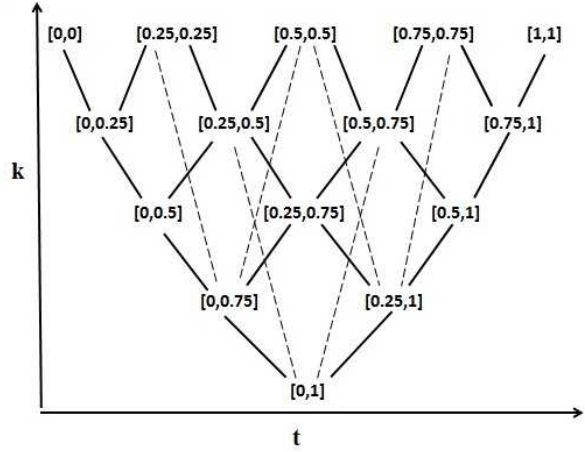}
\caption{I(\{0,0.25,0.5,0.75,1\}) with modified truth ordering}
\label{fig:modtruth}
\end{center}
\end{figure}

The algebraic structure for $(I(\{0,0.25,0.5,0.75,1\}), \leq_{t_p}, \leq_k)$ is shown in Figure \ref{fig:modtruth}.

\section {An alternative algebraic structure}

Based on these modifications we propose a modified and more intuitive algebraic structure for ordering intervals with respect to degree of truth and knowledge (or certainty).
\vspace{0.1in}

\textbf{Notation:} For an interval $x \in L^I$; $x_m$ and $x_w$ will be used to denote the midpoint (or center) and the length of the interval respectively; i.e. $x_m = (x_1+x_2)/2$ and $x_w = (x_2-x_1)$. The pair $(x_m, x_w)$ uniquely specifies an interval $x$ and hence may be used instead of the traditional representation $[x_1,x_2]$.
 
\begin{definition} \label{defmodi}
A preorder-based triangle is a structure $\textbf{P(L)} = (L^I, \leq_{t_p} , \leq_{k_p})$, defined for every $[x_1,x_2]$ and $[y_1,y_2] \in L^I$ as:
\\ 1. $[x_1,x_2] \leq_{t_p}[y_1,y_2]\Leftrightarrow x_m \leq y_m$,
\\ 2. $[x_1,x_2] \leq_{k_p}[y_1,y_2]\Leftrightarrow x_w \geq y_w$,
\\ 3. $ x = y \Leftrightarrow x_m = y_m$ and $x_w = y_w$.
\end{definition}  

\begin{figure}
\begin{center}
\includegraphics[width=70mm]{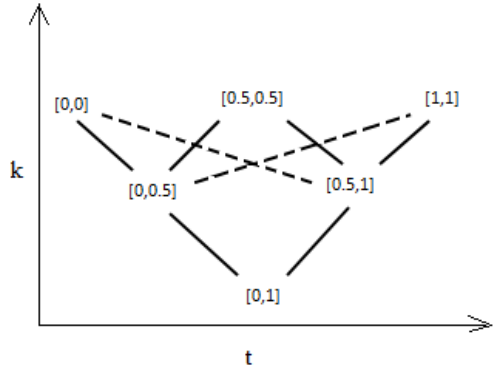}
\caption{Modified Triangle for I(\{0,0.5,1\}) }
\label{fig:smallmoditri}
\end{center}
\end{figure}

\begin{figure}
\begin{center}
\includegraphics [width=80mm]{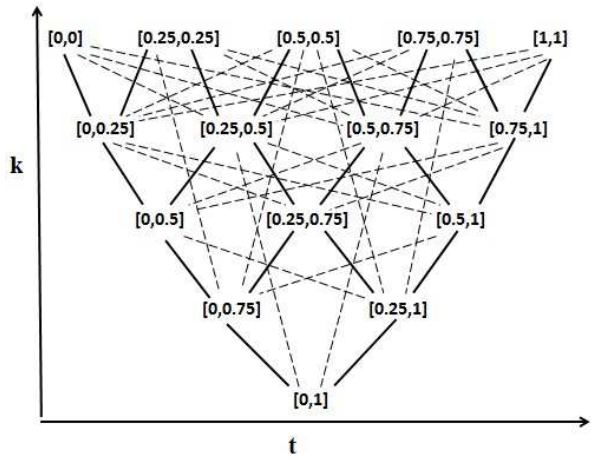}
\caption{Modified Triangle for I(\{0,0.25,0.5,0.75,1\})}
\label{fig:largemoditri}
\end{center}
\end{figure}

Preorder-based triangle can be defined for any subset of $L^I$ as well. For instance preorder-based triangle for $I(\{0,0.5,1\})$ and for $I(\{0,0.25,0.5,0.75,1\})$ are shown in Figure \ref{fig:smallmoditri} and Figure \ref{fig:largemoditri} respectively. The dashed lines demonstrate the connections that were absent in the bilattice-based triangle.

With the truth and knowledge ordering presented in Definition \ref{defmodi} we step out of the realm of lattice-based structures. The substructure $(L^I,\leq_{t_p})$ is not a lattice since for any two intervals $a$ and $b$, existence of $lub_{t_p}(a,b)$ and $glb_{t_p}(a,b)$ are not guaranteed. For instance, suppose $L=\{0,0.1,0.2..,1\}$ i.e. the unit interval discretised with eleven equidistant points. Now, two intervals in $I(L)$, $[0.8,0.8]$ and $[0.6,1]$ are incomparable with respect to $\leq_{t_p}$. The upper bound of the two intervals is not a unique element, but a set of intervals $\{[0.7,1],[0.8,0.9]\}$. Hence $lub_{t_p}$ doesn't exist. Lower bound of the two intervals is the set $\{[0.7,0.8], [0.6,0.9], [0.5, 1]\}$.

The ordering $\leq_{t_p}$ and $\leq_{k_p}$ over the set of intervals are not lattice-orders but \textbf{pre-orders}, i.e reflexive and transitive. The orderings are not symmetric clearly. Nor they are anti-symmetric, since $[0.5,0.5] \leq_{t_p} [0,1]$ and $[0,1] \leq_{t_p} [0.5,0.5]$ but $[0.5,0.5] \neq [0,1]$ and similar example holds for the k-ordering. Thus the substructure $(L^I,\leq_{t_p})$ and $(L^I,\leq_{k_p})$ form pre-ordered sets instead of lattices. Moreover, in $(L^I, \leq_{k_p})$ any set of intervals have lower bound but may not have upper bound. For instance, following the earlier example, intervals $[0.5,0.5]$ and $[0.8,0.8]$ doesn't have an upper bound, but has the set of intervals of length 0.1 as its lower bound.

Because of the modified knowledge ordering, the preorder-based triangle can be thought of as a unification of the default bilattice and the bilattice-based triangle.

One point must be emphasized here is that, bilattice-based triangle is a more generalised algebraic structure that can be defined for any set of intervals over any complete lattice. However, preorder-based triangle can only be defined over subintervals of $[0,1]$ (or any interval of real numbers), because intervals in general complete lattices may not have midpoints or lengths.

\begin{definition}
A set of intervals in $L^I$ is said to be an m-set for a specific value $a \in [0,1]$ and is defined as:

\begin{center}
$m-set_a = \{ x \vert x \in L^I$ and $x_m = a\}$.
\end{center}

i.e. the set of intervals incomparable with the interval $[a,a]$ with respect to their degree of truth.
 \end{definition}

\section {Logical Operators on P(L)}
All the logical operators, e.g. conjunction, disjunction, implication and negation, defined for bilattice-based triangle (\textbf{B}(L)) \cite {cornelis2007uncertainty, deschrijver2007bilattice} are applicable for preorder-based triangle (\textbf{P}(L)) as well. But the modified truth and knowledge ordering will incorporate some modifications in the definition and properties of the connectives. The notations $0_{L^I}$ and $1_{L^I}$ stand for intervals $[0,0]$ and $[1,1]$ respectively.

\subsection{Negator:}

\begin{definition}
A negator on $(L^I, \leq_{t_p})$ is a decreasing mapping $\textbf{N}: L^I \longrightarrow L^I$, for which $\textbf{N}(0_{L^I}) = 1_{L^I}$ and $\textbf{N}(1_{L^I}) = 0_{L^I}$. If $\textbf{N}(\textbf{N}(x))=x$, then $\textbf{N}$ is involutive.
\end{definition}

\begin{theorem}
\label{negatorth}

Suppose there exists an involutive negator N on $([0,1],\leq)$. Then for all $x = [x_1,x_2]$ in $L^I$ the mapping $ \textbf{N}: L^I\longrightarrow L^I$ defined as
 \begin{center}
$\textbf{N}(x) = [N(x_2), N(x_1)]$
 \end{center}
is an involutive negator on $(L^I,\leq_{t_p})$

\end{theorem}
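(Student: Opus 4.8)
The plan is to verify the three defining properties of an involutive negator on $(L^I, \leq_{t_p})$ in turn: that $\textbf{N}$ maps $L^I$ into $L^I$ and is well-defined, that it is decreasing with respect to $\leq_{t_p}$, that it swaps $0_{L^I}$ and $1_{L^I}$, and finally that it is involutive. First I would check well-definedness: given $x = [x_1,x_2] \in L^I$ we have $x_1 \leq x_2$, and since $N$ is decreasing on $[0,1]$ this gives $N(x_2) \leq N(x_1)$, so $[N(x_2),N(x_1)]$ is indeed a legitimate element of $L^I$. The boundary conditions are immediate from the involutive negator axioms on $[0,1]$: $\textbf{N}(0_{L^I}) = \textbf{N}([0,0]) = [N(0),N(0)] = [1,1] = 1_{L^I}$, and symmetrically $\textbf{N}(1_{L^I}) = 0_{L^I}$.

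The key step is showing $\textbf{N}$ is decreasing with respect to $\leq_{t_p}$, i.e. $x \leq_{t_p} y \Rightarrow \textbf{N}(y) \leq_{t_p} \textbf{N}(x)$. By Definition \ref{defmodi}, $x \leq_{t_p} y$ means $x_m \leq y_m$, that is $(x_1+x_2)/2 \leq (y_1+y_2)/2$, equivalently $x_1 + x_2 \leq y_1 + y_2$. I need to conclude that the midpoint of $\textbf{N}(y) = [N(y_2),N(y_1)]$ is at most the midpoint of $\textbf{N}(x) = [N(x_2),N(x_1)]$, i.e. $N(y_2) + N(y_1) \leq N(x_2) + N(x_1)$. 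This is exactly where the assumption that $N$ is involutive — hence, as is standard, an order-reversing bijection on $[0,1]$ — is not by itself enough; a general decreasing involution need not satisfy $a+b \leq c+d \Rightarrow N(c)+N(d) \leq N(a)+N(b)$. The cleanest route is to restrict attention to the strong negator $N(t) = 1-t$, under which $N(y_1)+N(y_2) = 2 - (y_1+y_2) \leq 2 - (x_1+x_2) = N(x_1)+N(x_2)$ follows immediately; I would either state this as the intended reading of "involutive negator" in this paper (matching the usage in \cite{cornelis2007uncertainty}) or add the hypothesis that $N$ additionally satisfies $N(a)+N(b) = N(c)+N(d)$ whenever $a+b=c+d$, equivalently that $t \mapsto N(t)$ is affine. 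Indeed, observing that $\textbf{N}(x)_m = \tfrac{N(x_1)+N(x_2)}{2}$ and, for $N(t)=1-t$, this equals $1 - x_m$, the monotonicity and involutivity both reduce to elementary facts about the map $a \mapsto 1-a$ on $[0,1]$.

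Finally, involutivity: $\textbf{N}(\textbf{N}(x)) = \textbf{N}([N(x_2),N(x_1)]) = [N(N(x_1)), N(N(x_2))] = [x_1,x_2] = x$, using $N(N(t)) = t$ for the inner negator at each endpoint. Note this recovers $x$ on the nose as an interval, so it holds a fortiori up to the equivalence $x = y$ of Definition \ref{defmodi}. The main obstacle, as indicated, is the monotonicity step: it genuinely needs the additive/affine behaviour of $N$ on endpoints, not merely that $N$ is a decreasing involution, so I would make that hypothesis explicit (or specialise to $N(t)=1-t$, which is the negator actually used throughout section 6) rather than claim the result for an arbitrary involutive negator. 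Everything else is a one-line verification.
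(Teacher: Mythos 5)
Your verification of well-definedness, the boundary conditions and involutivity coincides with the paper's, so the only point of comparison is the monotonicity step, and there your caution is vindicated: the paper does assert the result for an arbitrary involutive negator $N$ and proves decreasingness by a three-case analysis. Its Case~1 (the two intervals componentwise comparable) is fine for any decreasing $N$, but in the nested cases (Cases~2 and~3) it passes from $x_2-y_2\geq y_1-x_1$ to $N(y_2)-N(x_2)\geq N(x_1)-N(y_1)$ ``since $N$ is decreasing'', which is precisely the difference-preservation property you flagged as unjustified; a decreasing involution need not respect differences. Indeed the theorem as printed is false: for the Sugeno negator $N(t)=(1-t)/(1+t)$, which is involutive, take $x=[0,1]$ and $y=[0.4,0.5]$; then $x_m=0.5>0.45=y_m$, so $x\geq_{t_p}y$, yet $\textbf{N}(x)=[N(1),N(0)]=[0,1]$ has midpoint $0.5$ while $\textbf{N}(y)=[N(0.5),N(0.4)]=[1/3,3/7]$ has midpoint $8/21<0.5$, so $\textbf{N}(x)\not\leq_{t_p}\textbf{N}(y)$ and $\textbf{N}$ is not decreasing for $\leq_{t_p}$ (in contrast to the classical statement for $\leq_t$, where componentwise reversal suffices for every involutive $N$). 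So your proposed repair is the right one: either restrict to the standard negator $N(t)=1-t$ or add the hypothesis that $N$ maps equal endpoint-sums to equal endpoint-sums, and since a monotone solution of that Jensen-type condition with $N(0)=1$, $N(1)=0$ is exactly $N(t)=1-t$, the two formulations coincide; under it your observation $\textbf{N}(x)_m=1-x_m$ disposes of monotonicity in one line, and the remaining verifications are the same as the paper's.
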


\begin{proof}
\textbf{N} to be an involutive negator it must satisfy the following criteria:
\vspace{0.1in}

\textbf{1.Boundary Condition:}

N being an involutive negator on $([0,1],\leq)$, $N(0)=1$ and $N(1)=0$.

Therefore,

\vspace{0.1in}
$\textbf{N}(0_{L^I})=\textbf{N}([0,0])=[N(0),N(0)] = [1,1]=1_{L^I}$.
\vspace{0.1in}

$\textbf{N}(1_{L^I})=\textbf{N}([1,1])=[N(1),N(1)] = [0,0]=0_{L^I}$.
\vspace{0.1in}

\textbf{2.}\textbf{N} has to be \textbf{decreasing} on $(L^I,\leq_{t_p})$.
\vspace{0.1in}

Let $x=[x_1,x_2]$ and $y=[y_1,y_2]$ are two intervals in $L^I$.

Now suppose, without loss of generality, $x \geq_{t_p} y$; which implies,

\begin{center}
$\frac{x_1+x_2}{2} \geq \frac{y_1+y_2}{2}$ or, $x_1+x_2 \geq y_1+y_2$.
\end{center}
Case 1: If neither of $x$ and $y$ is a sub-interval of the other, i.e.
\begin{center}
$x_1 \geq y_1$ and $x_2 \geq y_2$.
\end{center}

Hence, $N(x_1) \leq N(y_1)$ and $N(x_2) \leq N(y_2)$; since N is decreasing.

Therefore, $N(x_1) + N(x_2) \leq N(y_1) + N(y_2)$,

or, \  \  \  \  \  \  \  \  \ $\frac{N(x_1) + N(x_2)}{2} \leq \frac{N(y_1) + N(y_2)}{2}$,

or, \  \  \  \  \  \  \  \  \ $\textbf{N}(x) \leq_{t_p} \textbf{N}(y)$.

Hence, \textbf{N} is decreasing.
\vspace{0.1in}

Case 2: When $y$ is a sub-interval of $x$. Thus,
 
\begin{center}
$x_1 \leq y_1$ and $y_2 \leq x_2$
\end{center}
Hence, $N(x_1) \geq N(y_1)$ and $N(y_2) \geq N(x_2)$.

Since, $x \geq_{t_p} y$,  \ $x_1+x_2 \geq y_1+y_2$.

or, \  \  \  \  \  \  \  \  \  \  \  \  \  \  \ $x_2-y_2 \geq y_1 - x_1$.

Therefore, \  \  \  \  \ $N(y_2)-N(x_2) \geq N(x_1) - N(y_1)$; since N is decreasing.

or, \  \  \  \  \  \  \  \  \  \  \  \  \  \  \ $N(y_2)+N(y_1) \geq N(x_1) + N(x_2)$.

or, \  \  \  \  \  \  \  \  \  \  \  \  \  \  \ $\textbf{N}(y) \geq_{t_p} \textbf{N}(x)$.

Thus, \textbf{N} is decreasing.
\vspace{0.1in}

Case 3: When $x$ is a sub-interval of $y$. Then;

\begin{center}
$x_1 \geq y_1$ and $y_2 \geq x_2$
\end{center}
Hence, $N(x_1) \leq N(y_1)$ and $N(y_2) \leq N(x_2)$.

Since, $x \geq_{t_p} y$,  \ $x_1+x_2 \geq y_1+y_2$.

or, \  \  \  \  \  \  \  \  \  \  \  \  \  \  \ $x_1-y_1 \geq y_2 - x_2$.

Therefore, \  \  \  \  \ $N(y_1)-N(x_1) \geq N(x_2) - N(y_2)$; since N is decreasing.

or, \  \  \  \  \  \  \  \  \  \  \  \  \  \  \ $N(y_1)+N(y_2) \geq N(x_1) + N(x_2)$.

or, \  \  \  \  \  \  \  \  \  \  \  \  \  \  \ $\textbf{N}(y) \geq_{t_p} \textbf{N}(x)$.
\vspace{0.1in}

Thus, \textbf{N} is decreasing.
\vspace{0.15in}

Therefore, it is proved that \textbf{N} satisfies the boundary conditions and is a decreasing mapping on $(L^I, \leq_{t_p})$. So \textbf{N} is a negator on $(L^I, \leq_{t_p})$.
\vspace{0.1in}

Since, N is involutive, we obtain that, $\forall x \in [0,1]$;

$\textbf{N}(\textbf{N}(x))=\textbf{N}([N(x_2),N(x_1)])$

$=[N(N(x_1)),N(N(x_2))]=[x_1,x_2]=x$.
\vspace{0.1in}

Hence, \textbf{N} is involutive.

\end{proof}

\textbf{A standard negator} 

For an element $x=[x_1,x_2]$ in $L^I$ the standard negation of $x$ is defined as:
\begin{definition}
$\textbf{N}_s(x) = [1-x_2,1-x_1]$.
\end{definition}

Thus the degree of knowledge is unaltered by negation, but the interval (and hence its midpoint) is reflected across the central line of $L^I$ i.e. the line joining points $[0.5,0.5]$ and $[0,1]$. This negation corresponds to classical negation.
\vspace{0.15in}

\textbf{Properties:}
\vspace{0.1in}

1. $\textbf{N}_s(0_{L^I}) = 1_{L^I}$.
\vspace{0.1in}

2. $\textbf{N}_s$ is decreasing.
\vspace{0.1in}

3. $\textbf{N}_s$ is continuous.
\vspace{0.1in}

4. $\textbf{N}_s$ is involutive; i.e. $\textbf{N}_s(\textbf{N}_s(x))=x$.
\vspace{0.1in}

One point that must be emphasized is that involutive negators can be defined on $(L^I, \leq_{t_p})$ that are not of the form stated in Theorem \ref{negatorth}.
\vspace{0.1in}

\textbf{Example:} Consider the lattice $\textbf{L} = (\{0, 1/3, 2/3, 1\}, \leq)$ and a mapping $\textbf{N}_1$ on $(I(L), \leq_{t_p})$ defined as follows:
\vspace{0.1in}

\begin{center}
 $\textbf{N}_1([x_1, x_2]) = [1/3,2/3]$ if $[x_1,x_2]$ is $[0,1]$\\
 \  \  \  \  \  \  \  \  \  \  \  \  \  \  \ $=[0,1]$ if $[x_1,x_2]$ is $[1/3,2/3]$\\
 \  \  \  \  \  \  \  \  \  \  \  \  \  \  \ $= [1-x_2, 1-x_1]$ otherwise.\\

\end{center}

$\textbf{N}_1$ is an involutive negator on $(I(L), \leq_{t_p})$, but is not of the form specified in Theorem \ref{negatorth}. This is the difference between negators on bilattice-based triangle \cite {cornelis2007uncertainty} and preorder-based triangles.

\subsection{\textbf{T-norms and T-conorms:}}

The t-norms and t-conorms can be defined over the preorder-based triangle.

\begin{definition}
\label{defconj}

A conjunctor on $(L^I, \leq_{t_p})$ is an increasing $L^I \times {L^I \rightarrow L^I}$ mapping \textbf{T} satisfying \textbf{T}$(0_{L^I},0_{L^I})= $\textbf{T}$(0_{L^I},1_{L^I})$ = \textbf{T}$(1_{L^I},0_{L^I})=0_{L^I}$ and \textbf{T}$(1_{L^I},1_{L^I})=1_{L^I}$. 

A conjunctor is called a semi-norm if $(\forall x \in L^I)($\textbf{T}$(1_{L^I},x) = $\textbf{T}$(x,1_{L^I}) = x)$ and a semi-norm is called a t-norm if it is commutative and associative.
\end{definition}

\begin{definition}
\label{defdisj}
A disjunctor on $(L^I, \leq_{t_p})$ is an increasing $L^I \times {L^I \rightarrow L^I}$ mapping \textbf{S} satisfying \textbf{S}$(1_{L^I},0_{L^I})= $\textbf{S}$(0_{L^I},1_{L^I})$ = \textbf{S}$(1_{L^I},1_{L^I})=1_{L^I}$ and \textbf{S}$(0_{L^I},0_{L^I})=0_{L^I}$. 

A disjunctor is called a semi-conorm if $(\forall x \in L^I)($\textbf{S}$(0_{L^I},x) = $\textbf{S}$(x,0_{L^I}) = x)$ and a semi-conorm is called a t-conorm if it is commutative and associative.
\end{definition}

Two important class of t-(co)norms defined for IVFS, namely t-representable and pseudo t-representable t-(co)norms \cite {deschrijver2008additive}, can be defined over the preorder-based triangle structure.

\begin{definition}
A t-norm \textbf{T} on $(L^I, \leq_{t_p})$ is called t-representable if there exist t-norms $T_1$ and $T_2$ on $([0,1], \leq)$ such that $T_1 \leq T_2$ and such that \textbf{T} can be represented as, for all $x,y \in L^I$:

\begin{center}
\textbf{T}$(x,y) = [T_1(x_1,y_1), T_2(x_2,y_2)].$
\end{center} 
$T_1$ and $T_2$ are called representants of \textbf{T}.
\end{definition}

\begin{definition}
A t-norm \textbf{T} on $(L^I, \leq_{t_p})$ is called pseudo t-representable if there exists an t-norms \textbf{T} on $([0,1], \leq)$ such that for all $x,y \in I(L)$:

\begin{center}
\textbf{T}$(x,y) = [T(x_1,y_1), max (T(x_1,y_2), T(x_2,y_1)].$
\end{center} 
$T$ is called the representant of \textbf{T}.
\end{definition}

\subsubsection{Min t-norm and t-conorm}

The Min t-norm ($\textbf{T}_{Min}$) is the greatest t-norm with respect to the $\leq_{t}$ ordering and is defined as:
\vspace{0.1in}
\begin{center}
$\textbf{T}_{Min} = [min(x_1,y_1), min(x_2,y_2)]$.
\end{center}
One property of this t-norm is that it doesn't hold that $\forall x,y \in L^I$ either $\textbf{T}_{Min}(x,y) = x$ or $\textbf{T}_{Min}(x,y) = y$; for instance, $\textbf{T}_{Min}([0.1,0.5],[0.2,0.3]) = [0.1,0.3]$. This phenomenon is not intuitive sometimes. Using the modified truth ordering $(\leq_{t_p})$ a variant of the min t-norm can be defined over $(L^I, \leq_{t_p})$ as follows:
\vspace{0.1in}

\begin{definition}
For any two intervals $x,y \in L^I$ 
\begin{center}
$\textbf{T}_{Min_p}(x,y) = min_t \{x,y\}$ \ if  \ $ x_m \neq y_m$ \\
  \  \   \  \  \  \  \  \  \  \  \ \ = $max_k \{x,y\}$  \ if \  $x_m = y_m$
\end{center}
\end{definition}

\begin{definition}
For any two intervals $x,y \in L^I$
\begin{center}
$\textbf{S}_{Min_p}(x,y) = max_t \{x,y\}$ \ if  \ $ x_m \neq y_m$\\
  \  \   \  \  \  \  \  \  \  \  \ \ = $max_k \{x,y\}$  \ if \  $x_m = y_m$
\end{center}
\end{definition}

In the above definition $min_t\{x,y\}$  gives the interval having lower degree of truth irrespective of its knowledge content, i.e. $min_t\{x,y\} = x$ if $x \leq_{t_p} y$. Similar meaning can be ascribed to $max_t\{x,y\}$. Whereas,  $min_k\{x,y\}$ gives the interval which is lower with respect to the k-ordering, i.e. having higher degree of uncertainty. For instance, $min_k\{x,y\} = x$ if $ x\leq_{k_p} y$. Similarly $max_k\{x,y\}$ can be defined.

It is clear that $\textbf{T}_{Min_p}$ and $\textbf{S}_{Min_p}$ satisfies the conditions in Definition \ref{defconj} and \ref{defdisj} respectively.

\textbf{Example:} $\textbf{T}_{Min_p}([0.1,0.5],[0.2,0.3]) = [0.2,0.3]$. Thus, for all $x,y \in L^I$ either $\textbf{T}_{Min_p}(x,y) = x$ or $\textbf{T}_{Min_p}(x,y) = y$.

\begin{theorem}
The t-norm $\textbf{T}_{Min_p}$, t-conorm $\textbf{S}_{Min_p}$ and negator $\textbf{N}_s$ forms a De-Morgan triplet, i.e.\\
1. $\textbf{T}_{Min_p} (x,y) = \textbf{N}_s(\textbf{S}_{Min_p}(\textbf{N}_s(x),\textbf{N}_s(y)))$,\\
2. $\textbf{S}_{Min_p} (x,y) = \textbf{N}_s(\textbf{T}_{Min_p}(\textbf{N}_s(x),\textbf{N}_s(y)))$.
\end{theorem}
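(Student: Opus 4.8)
The plan is to verify the two identities directly, exploiting three structural features of the standard negator $\textbf{N}_s$ already recorded above: it is involutive, it sends the midpoint $x_m$ to $1-x_m$, and it leaves the width unchanged, since $(1-x_1)-(1-x_2)=x_2-x_1$. From these one gets immediately that $\textbf{N}_s$ is an order-reversing bijection for $\leq_{t_p}$ — hence $\textbf{N}_s(min_t\{x,y\})=max_t\{\textbf{N}_s(x),\textbf{N}_s(y)\}$ and, dually, $\textbf{N}_s(max_t\{x,y\})=min_t\{\textbf{N}_s(x),\textbf{N}_s(y)\}$ — while it is width-preserving and therefore commutes with $max_k$, i.e. $\textbf{N}_s(max_k\{x,y\})=max_k\{\textbf{N}_s(x),\textbf{N}_s(y)\}$; and finally $x_m=y_m$ holds if and only if $(\textbf{N}_s x)_m=(\textbf{N}_s y)_m$, so that $\textbf{N}_s$ preserves the case split built into the definitions of $\textbf{T}_{Min_p}$ and $\textbf{S}_{Min_p}$. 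I would state and prove this short preliminary observation first.

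Next I would prove identity 1, $\textbf{T}_{Min_p}(x,y)=\textbf{N}_s(\textbf{S}_{Min_p}(\textbf{N}_s(x),\textbf{N}_s(y)))$, by splitting on whether $x_m=y_m$. If $x_m\neq y_m$, then $(\textbf{N}_s x)_m\neq(\textbf{N}_s y)_m$, so the inner term is $max_t\{\textbf{N}_s x,\textbf{N}_s y\}=\textbf{N}_s(min_t\{x,y\})$ by order-reversal; applying $\textbf{N}_s$ once more and using involutivity collapses the right-hand side to $min_t\{x,y\}$, which is exactly $\textbf{T}_{Min_p}(x,y)$. If $x_m=y_m$, then both $\textbf{T}_{Min_p}(x,y)$ and $\textbf{S}_{Min_p}(\textbf{N}_s x,\textbf{N}_s y)$ fall into the $max_k$ branch; since $\textbf{N}_s$ fixes widths, $max_k\{\textbf{N}_s x,\textbf{N}_s y\}=\textbf{N}_s(max_k\{x,y\})$, and a further application of $\textbf{N}_s$ with involutivity gives $max_k\{x,y\}=\textbf{T}_{Min_p}(x,y)$. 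Identity 2 then follows by substituting $\textbf{N}_s(x),\textbf{N}_s(y)$ for $x,y$ in identity 1 and cancelling by involutivity of $\textbf{N}_s$, using the (immediate) commutativity of both operators; alternatively it can be proved by the exactly symmetric case analysis.

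The one delicate point is the tie case $x_m=y_m$, where the definitions invoke $max_k\{x,y\}$: since $\leq_{k_p}$ is only a preorder, $max_k$ is not single-valued in general. However, when $x_m=y_m$ one has either $x_w=y_w$, in which case $x=y$ by clause 3 of Definition \ref{defmodi} and there is nothing to choose, or $x_w\neq y_w$, in which case the $k_p$-greater element is the unique one with the smaller width. Hence $max_k\{x,y\}$ is unambiguous wherever the definitions call on it, and the commutation identity $max_k\{\textbf{N}_s x,\textbf{N}_s y\}=\textbf{N}_s(max_k\{x,y\})$ is legitimate. Apart from this observation the argument is routine bookkeeping of the two cases, so I would present it as two short numbered cases mirroring the structure above.
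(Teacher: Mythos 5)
Your proof is correct and follows essentially the same route as the paper's: a case split on $x_m \neq y_m$ versus $x_m = y_m$, using that $\textbf{N}_s$ is involutive, reverses the $\leq_{t_p}$ preorder, and preserves width (hence the $\leq_{k_p}$ ordering). Your two refinements --- deriving identity 2 from identity 1 by substitution plus involutivity instead of repeating the symmetric case analysis, and checking that $max_k\{x,y\}$ is single-valued in the tie case $x_m = y_m$ --- are sound and merely tidy up points the paper's ``without loss of generality'' presentation leaves implicit.
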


\begin{proof}
Consider two intervals $x,y \in L^I$.

\textbf{Part 1}:	First, suppose intervals $x$ and $y$ are comparable with respect to $\leq_{t_p}$, and lets assume, without loss of generality $x \geq_{t_p} y$. Thus $\textbf{T}_{Min_p}(x,y) = y$ and $\textbf{S}_{Min_p}(x,y) = x$. Since $\textbf{N}_s$ is decreasing with respect to the degree of truth, then $\textbf{N}_s(x) \leq_{t_p} \textbf{N}_s(y)$. So, from definition $\textbf{S}_{Min_p}(\textbf{N}_s(x),\textbf{N}_s(y)) = \textbf{N}_s(y)$. Thus $\textbf{N}_s(\textbf{S}_{Min_p}(\textbf{N}_s(x), \textbf{N}_s(y))) = y =\textbf{T}_{Min_p}(x,y)$.

Moreover, if $x_m = y_m$ (i.e. $x$ and $y$ are incomparable with respect to their degree of truth), and say, $x \leq_{k_p} y$ $\textbf{T}_{Min_p}(x,y) = y$. Since the negator $\textbf{N}_s$ preserves the degree of knowledge and reverses the degree of truth, $\textbf{N}_s(x)$ and $\textbf{N}_s(y)$ are incomparable in t-ordering and $\textbf{N}_s(x) \leq_{k_p} \textbf{N}_s(y)$. Thus, from definition $\textbf{S}_{Min_p}(\textbf{N}_s(x),\textbf{N}_s(y)) = \textbf{N}_s(y)$ and $\textbf{N}_s(\textbf{S}_{Min_p}(\textbf{N}_s(x),\textbf{N}_s(y)))= y =\textbf{T}_{Min_p}(x,y)$.
\vspace{0.1in}

\textbf{Part 2}: If $x \leq_{t_p} y$, then $\textbf{N}_s(x) \geq_{t_p} \textbf{N}_s(y)$ and $\textbf{T}_{Min_p}(\textbf{N}_s(x),\textbf{N}_s(y)) = \textbf{N}_s(y)$. Thus $\textbf{N}_s(\textbf{T}_{Min_p}(\textbf{N}_s(x), \textbf{N}_s(y))) = y =\textbf{S}_{Min_p}(x,y)$.

Moreover, if $x_m = y_m$ and say, $x \leq_{k_p} y$ $\textbf{S}_{Min_p}(x,y) = y$. The negator $\textbf{N}_s$ being order preserving for k-ordering, $\textbf{N}_s(x) \leq_{k_p} \textbf{N}_s(y)$. Thus, from definition $\textbf{T}_{Min_p}(\textbf{N}_s(x),\textbf{N}_s(y)) = \textbf{N}_s(y)$ and $\textbf{N}_s(\textbf{T}_{Min_p}(\textbf{N}_s(x),\textbf{N}_s(y)))= y =\textbf{S}_{Min_p}(x,y)$.
  
Hence, the t-norm $\textbf{T}_{Min_p}$, t-conorm $\textbf{S}_{Min_p}$ and negator $\textbf{N}_s$ forms a De-Morgan triplet.

\end{proof}

\subsubsection{Product t-norm and t-conorm}

The product t-(co)norm is useful to model the conjunction of independent events in probabilistic semantics. The t-representable and pseudo t-representable proudct t-(co)norms on $(L^I, \leq_{t_p})$ are defined in the same way as defined on $(L^I, \leq_{t})$. 

\begin{definition}
For any two intervals $x,y \in L^I$, the product t-norm is defined as follows:

\begin{center}
$\textbf{T}_{pr}([x_1,x_2],[y_1,y_2]) = [x_1y_1, x_2y_2]$, (t-representable)\\
$\textbf{T}_{ppr}([x_1,x_2],[y_1,y_2]) = [x_1y_1, max (x_1y_2, x_2y_1)]$, (pseudo t-representable)\\
\end{center}
\end{definition}

\begin{theorem}
For any $x,y \in L^I$
\begin{center}
$T_{pr} \geq_{t_p} T_{ppr}$
\end{center}
\end{theorem}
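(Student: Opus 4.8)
The plan is to show that for every pair of intervals $x=[x_1,x_2]$ and $y=[y_1,y_2]$ in $L^I$, the midpoint of $\textbf{T}_{pr}(x,y)=[x_1y_1,\,x_2y_2]$ is at least the midpoint of $\textbf{T}_{ppr}(x,y)=[x_1y_1,\,\max(x_1y_2,x_2y_1)]$. By the definition of $\leq_{t_p}$ in Definition~\ref{defmodi}, this is exactly the claim $\textbf{T}_{pr}(x,y)\geq_{t_p}\textbf{T}_{ppr}(x,y)$. Since both intervals share the same left endpoint $x_1y_1$, comparing midpoints reduces to comparing right endpoints: it suffices to prove $x_2y_2\geq\max(x_1y_2,x_2y_1)$.

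First I would verify that $x_2y_2\geq x_1y_2$ and $x_2y_2\geq x_2y_1$ separately. For the first, note $x_1\leq x_2$ (since $x\in L^I$) and $y_2\geq 0$, so multiplying gives $x_1y_2\leq x_2y_2$. For the second, $y_1\leq y_2$ and $x_2\geq 0$, so $x_2y_1\leq x_2y_2$. Taking the maximum of the two left-hand sides, $\max(x_1y_2,x_2y_1)\leq x_2y_2$, which is the desired endpoint inequality. Consequently the midpoints satisfy $\tfrac{1}{2}(x_1y_1+x_2y_2)\geq\tfrac{1}{2}(x_1y_1+\max(x_1y_2,x_2y_1))$, i.e. $(\textbf{T}_{pr}(x,y))_m\geq(\textbf{T}_{ppr}(x,y))_m$, so $\textbf{T}_{pr}(x,y)\geq_{t_p}\textbf{T}_{ppr}(x,y)$.

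There is essentially no hard obstacle here; the only thing worth checking carefully is that $\textbf{T}_{ppr}(x,y)$ is a genuine element of $L^I$, i.e. that its left endpoint does not exceed its right endpoint — but this follows immediately from $x_1y_1\leq x_1y_2\leq\max(x_1y_2,x_2y_1)$, using $y_1\leq y_2$ and $x_1\geq 0$. One might also remark that equality $\textbf{T}_{pr}(x,y)=_{t_p}\textbf{T}_{ppr}(x,y)$ holds precisely when $x_2y_2=\max(x_1y_2,x_2y_1)$, which happens for instance whenever $x$ or $y$ is an exact interval of the form $[a,a]$; this is a minor aside and not needed for the statement. The whole argument is a two-line endpoint comparison followed by invoking the definition of $\leq_{t_p}$.
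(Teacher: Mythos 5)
Your argument is correct: since both conjunctors share the left endpoint $x_1y_1$, the inequality $x_2y_2\geq\max(x_1y_2,x_2y_1)$ immediately gives the midpoint comparison required by $\leq_{t_p}$, which is exactly the ``straightforward'' endpoint argument the paper alludes to without writing out. Nothing is missing, and the side remark that $\textbf{T}_{ppr}(x,y)$ is a legitimate element of $L^I$ is a nice extra check.
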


The proof of the above theorem is straightforward. 

\begin {definition}
The t-representable t-conorm can be defined as:
\begin{center}
$\textbf{S}_{pr}([x_1,x_2],[y_1,y_2]) = [1-(1-x_1) \times (1-y_1), 1-(1-x_2) \times (1-y_2)]$\\
\end{center}
\end{definition}

\begin{theorem}
The t-norm $\textbf{T}_{pr}$, t-conorm $\textbf{S}_{pr}$ and the standard negator $\textbf{N}_s$ forms a De-Morgan triplet, i.e.

1. $\textbf{T}_{pr} (x,y) = \textbf{N}_s (\textbf{S}_{pr}(\textbf{N}_s(x),\textbf{N}_s(y)))$,

2. $\textbf{S}_{pr} (x,y) = \textbf{N}_s (\textbf{T}_{pr}(\textbf{N}_s(x),\textbf{N}_s(y)))$,
\end{theorem}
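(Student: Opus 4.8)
The plan is to verify the two De Morgan identities directly from the coordinate-wise definitions of $\textbf{T}_{pr}$, $\textbf{S}_{pr}$, and $\textbf{N}_s$, since all three operators act on intervals $[x_1,x_2]$ purely through the endpoint values and the relevant identity reduces to the classical De Morgan law for the product t-norm and probabilistic sum on $[0,1]$. First I would fix $x=[x_1,x_2]$ and $y=[y_1,y_2]$ in $L^I$ and compute $\textbf{N}_s(x)=[1-x_2,1-x_1]$ and $\textbf{N}_s(y)=[1-y_2,1-y_1]$.

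For Part 1, I would evaluate $\textbf{S}_{pr}(\textbf{N}_s(x),\textbf{N}_s(y))$. Writing $a_1=1-x_2$, $a_2=1-x_1$, $b_1=1-y_2$, $b_2=1-y_1$, the definition gives $\textbf{S}_{pr}(\textbf{N}_s(x),\textbf{N}_s(y)) = [1-(1-a_1)(1-b_1),\,1-(1-a_2)(1-b_2)] = [1-x_2y_2,\,1-x_1y_1]$. Applying $\textbf{N}_s$ to this interval reflects the endpoints: $\textbf{N}_s([1-x_2y_2,1-x_1y_1]) = [1-(1-x_1y_1),\,1-(1-x_2y_2)] = [x_1y_1,\,x_2y_2] = \textbf{T}_{pr}(x,y)$, which is exactly identity 1. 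For identity 2 I would run the computation in the other direction: $\textbf{T}_{pr}(\textbf{N}_s(x),\textbf{N}_s(y)) = [a_1b_1,\,a_2b_2] = [(1-x_2)(1-y_2),\,(1-x_1)(1-y_1)]$, and then $\textbf{N}_s$ of that is $[1-(1-x_1)(1-y_1),\,1-(1-x_2)(1-y_2)] = \textbf{S}_{pr}(x,y)$.

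I would include a brief remark that all the intermediate objects are genuinely elements of $L^I$ — in particular that $1-x_2y_2 \le 1-x_1y_1$ follows from $x_1y_1 \le x_2y_2$, which holds since $0 \le x_1 \le x_2$ and $0 \le y_1 \le y_2$ — so the applications of $\textbf{N}_s$ are legitimate. I do not expect any real obstacle here: unlike the $\textbf{T}_{Min_p}$/$\textbf{S}_{Min_p}$ case, $\textbf{T}_{pr}$ and $\textbf{S}_{pr}$ are defined by fixed endpoint formulas with no case split on whether $x_m=y_m$, so the proof is a single routine endpoint computation and the only mild subtlety is just bookkeeping the reflection $\textbf{N}_s$ performs on the endpoints. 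If one wanted to shorten it, the whole thing follows from the observation that $\textbf{N}_s$ is an involutive order-isomorphism exchanging the roles of the two endpoints together with the classical fact that $(T_P, S_P, 1-\cdot)$ is a De Morgan triple on $[0,1]$.
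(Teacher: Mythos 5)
Your proposal is correct and follows essentially the same route as the paper: a direct endpoint computation of $\textbf{S}_{pr}(\textbf{N}_s(x),\textbf{N}_s(y))$ and $\textbf{T}_{pr}(\textbf{N}_s(x),\textbf{N}_s(y))$ followed by one application of $\textbf{N}_s$ to recover $\textbf{T}_{pr}(x,y)$ and $\textbf{S}_{pr}(x,y)$ respectively. Your added check that the intermediate intervals lie in $L^I$ is a small, harmless refinement the paper omits.
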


\begin{proof}
Consider any two intervals $[x_1,x_2],[y_1,y_2] \in L^I$. 
 
1.$\textbf{S}_{pr}(\textbf{N}_s(x),\textbf{N}_s(y))$ 

$= \textbf{S}_{pr}([1-x_2,1-x_1],[1-y_2,1-y_1])$ 

$= [1 - x_2 \times y_2, 1 - x_1 \times y_1]$.

Now, $\textbf{N}_s(\textbf{S}_{pr}(\textbf{N}_s(x),\textbf{N}_s(y))) = \textbf{N}_s([1 - x_2 \times y_2, 1 - x_1 \times y_1])$
$= [x_1\times y_1, x_2 \times y_2] = \textbf{T}_{pr}([x_1,x_2],[y_1,y_2])$ 
\vspace{0.15in}

2. $\textbf{T}_{pr}(\textbf{N}_s(x),\textbf{N}_s(y))$
 
$= \textbf{T}_{pr}([1-x_2,1-x_1],[1-y_2,1-y_1])$
 
$= [(1 - x_2) \times (1 - y_2), (1 - x_1) \times (1 - y_1)]$.

Now, $\textbf{N}_s(\textbf{T}_{pr}(\textbf{N}_s(x),\textbf{N}_s(y)))$

$ = \textbf{N}_s((1 - x_2) \times (1 - y_2), (1 - x_1) \times (1 - y_1))$

$= [1 - (1 - x_1) \times (1 - y_1), 1 - (1 - x_2) \times (1 - y_2)]$

$ = \textbf{S}_{pr}([x_1,x_2],[y_1,y_2])$; (from definition).\\
The t-norm $\textbf{T}_{pr}$, t-conorm $\textbf{S}_{pr}$ and the standard negator $\textbf{N}_s$ forms a De-Morgan triplet.

\end{proof}

Thus, the preorder-based triangle structure offers us the flexibility to choose t-norms and t-conorms already defined for bilattice-based triangle or to define new connectives in accordance to the newly defined t-ordering and k-ordering.

\subsection{Implicators:}

\begin{definition} 
An implicator on $(L^I, \leq_{t_p})$ is a hybrid monotonous $L^I \times L^I \rightarrow L^I$ mapping \textbf{I}(i.e. a mapping with decreasing first and increasing second partial mapping) that satisfies \textbf{I}($0_{L^I},0_{L^I}$) = \textbf{I}($0_{L^I},1_{L^I}$) = \textbf{I}($1_{L^I},1_{L^I}$) $=1_{L^I}$ and \textbf{I}($1_{L^I},0_{L^I}$) $=0_{L^I}$.
\end{definition}

One of the common class of implicators are Strong-implicators or S-implicators in short.

\begin{definition}
For two intervals $x,y \in L^I$ and any t-conorm \textbf{S} and negator \textbf{N} on $(L^I, \leq_t)$ the S-implicator generated by \textbf{S} and \textbf{N} is 
\begin{center}
$\textbf{I}_{S,N}(x,y) = \textbf{S}(\textbf{N}(x),y)$.
\end{center}
\end{definition}

The S-implicators defined for the structure $(L^I,\leq_{t_p})$ are similar to those defined for $(L^I,\leq_t)$, and are not discussed further.

There is another important class of implicators, namely R-implicators, generated as residuum of some t-norms on $(L^I,\leq_t)$. An R-implicator on $(L^I,\leq_t)$ generated by a t-norm \textbf{T} is defined as:
\begin{center}
$\textbf{I}_R(x,y) = Sup \{\gamma \in L^I| \textbf{T}(x,\gamma) \leq_t y\}$.
\end {center}
Now, because the definition involves truth ordering, the modified definition of $\leq_{t_p}$ demands modification to the definition of R-implicator.

\begin{definition}
For a t-norm \textbf{T} defined on $(L^I, \leq_{t_p})$ an R-implicator generated from \textbf{T} is defined as:

$\textbf{I}_{R_{t_p}}(x,y) = Sup_{t_p}\{ \gamma \in L^I| \textbf{T}(x,\gamma) \leq_{t_p} y$ or $[\textbf{T}(x, \gamma)]_m = y_m \}$

\end{definition}
where, $Sup_{t_p}$ is the interval having maximum degree of truth. Sometimes, instead of a unique value, the operation $Sup_{t_p}$ may give a set of intervals belonging to the same m-set and hence $\textbf{I}_{R_{t_p}}(x,y)$ may not be unique.

\textbf{Example:} Suppose $\textbf{L} = ([0,1], \leq)$ and the t-norm is $\textbf{T}_{Min_p}$. Then the R-implicator generated from this t-norm is given by:
\begin{center}
$\textbf{I}_{Min} = Sup_{t_p} \{\gamma \in L^I| \textbf{T}_{Min_p} (x, \gamma) \leq_{t_p} y$ or $[\textbf{T}_{Min_p} (x,\gamma)]_m = y_m\}$.
\end{center}

1. If $x \leq_{t_p} y$, for any $\gamma \in L^I$, $\textbf{T}_{Min_p} (x, \gamma) \leq_{t_p} y$. Thus, $\textbf{I}_{Min} = [1,1]$.

2. If $x_m = y_m$, then for any $\gamma \geq_{t_p} x$, $[\textbf{T}_{Min_p} (x, \gamma)]_m = y_m$. Thus, $\textbf{I}_{Min} = [1,1]$.

3. If $x \geq_{t_p} y$, then for any interval $\gamma$ with $\gamma_m = y_m$, we have $[\textbf{T}_{Min_p} (x, \gamma)]_m = y_m$. Thus, $\textbf{I}_{Min} = \gamma$ s.t. $\gamma_m = y_m$. Hence, the implicator does not give a unique element, but an $m-set_a$ of intervals with $a = y_m$.

\textbf{Example:} Suppose $\textbf{L} = ([0,1], \leq)$ and the t-norm is $\textbf{T}_{pr}$. 

Then the R-implicator generated from this t-norm is given by:
\begin{center}
$\textbf{I}_{pr} = Sup_{t_p} \{\gamma \in L^I| \textbf{T}_{pr} (x, \gamma) \leq_{t_p} y$ or $[\textbf{T}_{pr} (x,\gamma)]_m = y_m\}$.
\end{center}
or, in other words,
\begin{center}
$\textbf{I}_{pr} = Sup_{t_p} \{\gamma \in L^I|(x_1 \times \gamma_1 + x_2 \times \gamma_2$  $\leq y_1 + y_2)$ or $(x_1 \times \gamma_1 = y_1$ and $x_2 \times \gamma_2 = y_2) \}$.
\end{center}

Case 1: If $x_1 + x_2 \leq y_1 + y_2$; $\textbf{I}_{pr} = [1,1]$.

Case 2: When $x_1 + x_2 > y_1+ y_2$, i.e. $x >_{t_p} y$ and no interval resides completely in the other, i.e. $y_1 \leq x_1$ and $y_2 \leq x_2$;

$\textbf{I}_{pr} = max_{t_p} ([\frac{y_1}{x_1}, \frac{y_2}{x_2}] , [\frac{y_1 + y_2}{x_1 + x_2}, \frac{y_1 + y_2}{x_1 + x_2}])$.

Note: $[\frac{y_1}{x_1}, \frac{y_2}{x_2}]_m - [\frac{y_1 + y_2}{x_1 + x_2}, \frac{y_1 + y_2}{x_1 + x_2}]_m = \frac{(x_2 - x_1)(y_1x_2 - y_2x_1)}{2x_1x_2(x_1 + x_2)}$.\\
Thus, 
\begin{center}
$\textbf{I}_{pr} = [\frac{y_1}{x_1}, \frac{y_2}{x_2}]$ if $\frac{y_1}{y_2} > \frac{x_1}{x_2}$,\\
 \  \  \  \  \  \  \   \  \  \  \ $= [\frac{y_1 + y_2}{x_1 + x_2}, \frac{y_1 + y_2}{x_1 + x_2}]$ otherwise.\\
\end{center}

Case 3: When $x_1 + x_2 > y_1+ y_2$, i.e. $x >_{t_p} y$ and one interval resides completely in the other, i.e. either $x_1 \leq y_1 \leq y_2 < x_2$ or $y_1 < x_1 \leq x_2 \leq y_2$; $\textbf{I}_{pr} = [\gamma, \gamma]$ where, $\gamma = \frac{y_1 + y_2}{x_1 + x_2}$, since, $[x_1 \times \gamma_1, x_2 \times \gamma_2]_m = \gamma \times \frac{x_1 + x_2}{2} = y_m$.

\section{Modeling reasoning problems using the Preorder-based triangle:}

This section is devoted to demonstrate how the modified algebraic structure, namely the preorder-based triangle, can be employed to model the motivating examples shown in section 3. It is also demonstrated that how the preorder-based triangle can solve the reasoning problems mentioned before.

1. In Example 1, which presented an intuitive explanation regarding the inadequacy of knowledge ordering, initially the experts assigned the epistemic state $[0.5,1]$ to the statement "Tweety Flies" based on incomplete knowledge. Then when they came to know that Tweety is a penguin they reassessed the epistemic state of "Tweety Flies" as $[0,0]$. However, since intervals $[0.5,1]$ and $[0,0]$ are incomparable with respect to $\leq_k$ in bilattice-based triangle it could not be decided which one is a surer assertion and which one should be taken as the final assessment. In preorder-based triangle this dilemma can be solved since in the newly defined knowledge ordering we can see $[0.5,1] \leq_{k_p} [0,0]$ (since $(1-0.5) > (0-0)$) and hence it can be seen that interval $[0,0]$ is placed higher in the modified k-ordering than $[0.5,1]$. Thus the new ordering prompts to choose the definite fact "Tweety doesn't fly" (with the assigned interval [0,0]) over the default fact "Tweety flies" (with the assigned interval [0.5,1]).

2. Example 2 dealt with logical reasoning in a visual surveillance system for human detection and based on the given information we tried to reason about whether two individuals '$a$' and '$b$' were same person or not. Performing the reasoning it was found that:

\begin{center}

$cl_+(\phi)(equal(a,b)) = [0.5,0.8]$

$cl_-(\phi)(equal(a,b)) = [0,0.1]$

\end{center}

where, $cl_+(\phi)(equal(a,b))$ $(cl_-(\phi)(equal(a,b)) )$accounts for the belief in support of (against to) the fact that $a$ and $b$ are same person. Now to reach a final conclusion we must consider which of the positive and negative evidence provides with a stronger and surer belief. Hence the two epistemic states have to be compared with respect to the knowledge ordering and we must go with the one placed higher in knowledge ordering. But it was seen that with the knowledge ordering in the bilattice-based triangle the two intervals were incomparable. Though it can be seen that interval $[0.5,0.8]$ is wider interval than that of $[0,0.1]$ and hence intuitively the later one is a more certain assessment.

In the preorder-based triangle $[0.5,0.8] \leq_{k_p} [0,0.1]$ and thus the interval $lub_k ([0.5,0.8],[0,0.1]) = [0,0.1]$, will be taken as the final assertion of $equal(a,b)$. 

This result can be achieved by an additional step in the reasoning:

\begin{center}
$\phi[equal(a,b)] = lub_k (cl_+(\phi)(equal(a,b)), cl_-(\phi)(equal(a,b)))$.
\end{center}

There is still a chance that, for two intervals $lub_k$ doesn't exist as the triangle structure is not complete with knowledge ordering. In that case the indecision is justified since both the assertions give same amount of information, i.e. the corresponding intervals are of equal length. Hence in such a situation human intervention is necessary or some other parameter can be used to reach to a conclusion depending on the application.

But it is to be emphasized that the modified knowledge ordering in the preorder based triangle gives intuitive results in situations where bilattice-based triangle fails to do so and hence the former has a wider applicability in real life common sense reasoning.

3. In Example 3, reasoning in an artificial triage system was investigated. It was assumed that a patient is suffering from two diseases($di1$, $di2$), whose drugs are mutually incompatible and the sequence of treatment could not be decided as the diseases have severity levels specified by intervals $[0.4,0.9]$ and $[0.5,0.6]$ which are incomparable with respect to t-ordering of bilattice-based triangle. The intervals being incomparable by truth ordering, rules 3 and 4, as specified in the example, become unusable. 

However the modified truth ordering in preorder-based triangle can resolve this conflict. Firstly the rules 3 and 4 in example 3 are modified as follows:

3'. $\phi [(\phi[di2] \leq_{t_p} \phi[di1]) \longrightarrow dr1] = [1,1]$

4'. $\phi [(\phi[di1] \leq_{t_p} \phi[di2]) \longrightarrow dr2] = [1,1]$

Now, in the current scenario Rule 3' is invoked, inferring to administer drug $dr1$, since we have $[0.5,0.6] \leq_{t_p} [0.4,0.9]$. This is intuitive; since we have $[0.5,0.6]_m = 0.55 < 0.65 = [0.4,0.9]_m$ and hence from Theorem \ref{theorem:midpoint}, $Prob (\hat{i}_{d1} \leq \hat{i}_{d2}) \leq Prob(\hat{i}_{d2} \leq \hat{i}_{d1})$, i.e. though the severity degrees of the two diseases have uncertainties but it is more probable that $di1$ has higher severity than that of $di2$. This probabilistic intuition was not reflected in the truth ordering of bilattice-based triangle.

Here also some situations may arise where intervals are not comparable with respect to $\leq_{t_p}$, as is depicted by Figure \ref{fig:incomparable}. But that is also justified, since in that case, the probability that disease $di1$ is more severe is equal to the probability that disease $di2$ is more severe and nothing can be decided. In these cases, depending on the application, some other parameter have to be chosen to break the tie. Thus the modified truth ordering in the preorder-based triangle is more intuitive and suitable for practical applications.

Therefore as a whole the aforementioned examples demonstrate that, the inference capacity and applicability of $P(L)$-based reasoning systems are broader than the bilattice-based systems. 

\vspace{0.15in}
\textbf{Relationship of $P(L)$ with existing well-known nonmonotonic reasoning formalism:}

Answer Set Programming (ASP) is a well established nonmonotonic reasoning paradigm for commonsense reasoning in discrete domain. This reasoning formalism cannot deal with vague and uncertain information. Possibilistic Fuzzy Answer Set Programming (PFASP) \cite{bauters2010towards} is an extension of ASP for reasoning in continuous domain in presence of uncertainties. Thus PFASP performs nonmonontonic reasoning in presence of both vagueness and uncertainty. 

In PFASP two real numbers taken from the unit interval $[0,1]$ are used to specify the degree of vagueness and degree of possibility. The assignment of the two numbers are actually independent of each other. However we can argue that  assigning two independent numbers as the degree of vagueness and the degree of possibility of a piece of information is not always intuitive. Because in human commonsense reasoning our assessment of the degree of truth of a proposition is somewhat dependent on the certainty about that proposition.

In PFASP, there is nothing to prevent us from using a fact as:

\begin{center}
$0.5: a \longleftarrow 1$,
\end{center}

Here, the possibility degree $0.5$ signifies that the truth of $a$, i.e. $a$ getting truth value 1, is neither possible nor impossible, i.e. nothing can be said about whether $a$ is true or not. In other words we have no knowledge at all about $a$. Thus, the facts $0.5: a \longleftarrow 0$, $0.5: a \longleftarrow 1$, $0.5: a \longleftarrow 0.5$ are all the same as the convey no information about $a$'s truth degree. Also, it is unjustified to assert any truth value to $a$ if we possess no knowledge about the statement. Moreover the facts, $0.5: a \longleftarrow 0.6$ and $0.5: b \longleftarrow 0.8$ prompt to infer that $b$ is more true than $a$. But the possibility degree of 0.5 for both the facts suggests that we have no knowledge about $a$ or $b$; hence comparing the truth of $a$ and $b$ is not even meaningful.

In this situation ascribing the interval $[0,1]$ to such a statement would be more compact, straightforward and intuitive. Moreover, a range of values comprising an interval is a more natural representation of uncertainty or lack of unanimity of multiple experts.
 
Therefore the preorder-based triangle can be employed as the truth value space or set of epistemic states in Answer Set programming to give rise to a framework capable of performing nonmonotonic reasoning with vague and uncertain information.

To appreciate the reasoning power of the preorder-based triangle structure we can consider an example stated in \cite{bauters2010towards} and try to reformulate it using the proposed $P(L)$. The example involves reasoning about risk of roads during snow and low temperature. The rules specified in the example are as follows:

r1: $1: cold \longleftarrow 0.6$

r2: $1: wet \longleftarrow 0.4$

r3: $1: risky \longleftarrow cold. snow$

r4: $0.8: snow \longleftarrow (cold \geq 0.5) \wedge wet$

r5: $0.6: risky \longleftarrow 0.5. cold$

r6: $0.6: risky \longleftarrow 0.8. wet$

Performing the reasoning, using consequence operators, the final belief set comes to be,

$S_1 = \{cold^{1;0.6}, wet^{1;0.4}, snow^{0.8;0.4}, risky^{0.6,0.32}, risky^{0.8,0.24}\}$

Here, a candidate $x^{c;t}$ denotes truth value of $x$ can be asserted to be $t$ with certainty degree $c$.

Remodeling using ASP based on preorder-based triangle the above rules can be transformed into the following:

r'1: $cold \longleftarrow [0.6,0.6]$

r'2: $wet \longleftarrow [0.4, 0.4]$

r'3: $risky \stackrel{[1,1]}{\leftarrow} cold. snow$

r'4: $snow \stackrel{[0.8,1]}{\leftarrow} T_{pr}((cold \geq_t [0.5,0.5]), wet)$

r'5: $risky \stackrel{[0.3,0.7]}{\leftarrow} cold$

r'6: $risky \stackrel{[0.6,1]}{\leftarrow} wet$

The weight of the rules are subintervals of $[0,1]$ and thus denotes the level of vagueness and uncertainty level regarding the rule. In other words, the weight denotes what would be the epistemic state of the consequent of the rule when the antecedent is absolutely true, i.e. assigned with $[1,1]$. It can be noted that rules r1 and r2 are actually facts representing certain fuzzy information and hence in the transformed program rules r'1 and r'2 have exact intervals as their weights. Rules r'5 and r'6 have wider intervals as their weights than rule r'4. This signifies that rule r'4 is a more certain rule than r'5 or r'6. If the antecedent of a rule gets an epistemic state other than $[1,1]$, say $a$, then the epistemic state of the consequent would be $T_{pr}(a,w)$, where $w$ is the weight of the rule. If a particular atom is consequent of more than one rules then the final assigned epistemic state can be obtained by taking the least upper bound of the individual epistemic states.

Now, performing similar reasoning with the transformed rules the resulting belief set becomes:

$S_2 = \{cold: [0.6,0.6], wet: [0.4,0.4], snow: [0.32,0.4],\\ risky: lub_t([0.24,0.24], [0.18,0.42], [0.24,0.4])\}$

or, $S_2 = \{cold: [0.6,0.6], wet: [0.4,0.4], snow: [0.32,0.4], risky: [0.24,0.4]\}$.

It can be seen that the set $S_2$ provides with range of possible values for each atomic statement. This representation is easier to comprehend. Therefore $P(L)$ is suitable to be used to develop nonmonotonic reasoning formalisms to deal with fuzzy and uncertain information.

\section {Conclusion:}

We conclude with a critical appreciation of the proposed structure with respect to the bilattice-based triangle. The structure, preorder-based triangle, together with the logical operators defined on it, provides a framework for reasoning with imprecise, uncertain and incomplete information. Unlike bilattice-based triangle, the preorder-based triangle is capable of handling repetitive belief revisions in nonmonotonic reasoning. Moreover the truth ordering in the new structure is more intuitive. The fact that the knowledge and truth order now become fully orthogonal does have a strong appeal in application areas involving nonmonotonic logical reasoning with vague and incomplete information. As demonstrated here, all the operators defined for bilattice-based triangles are suitable for the proposed structure as well and the modified truth ordering invokes some new logical connectives with interesting properties. Thus, the proposed preorder-based structure can be considered as an enhancement to bilattice-based triangle.  

This work is an preliminary analysis of the necessity of preorder-based triangle and its pros and cons, and leaves enough scope for further investigation and analysis.

\bibliographystyle{apacite}
\renewcommand\bibliographytypesize{\fontsize{10}{12}\selectfont}
\bibliography{biblist}

\end{document}